\DeclareMathOperator*{\argmax}{arg\,max}
\DeclareMathOperator*{\argmin}{arg\,min}
\DeclareMathOperator{\var}{Var}
\newcommand{\PR}{\mathbb P}
\newcommand{\EX}{\mathbb E}
\newcommand{\Ind}{\mathbb{I} }
\newcommand{\tA}{A_{ti}}
\newcommand{\sA}{A_{si}}
\newcommand{\tY}{Y_{ti}}
\newcommand{\sY}{Y_{si}}
\newcommand{\tS}{S_{ti}}
\newcommand{\tRtwo}{R_{ti}^{(2)}}
\newcommand{\tRone}{R_{ti}^{(1)}}
\newcommand{\PP}{\mathcal P}
\newcommand{\muone}{\hat{\mu}_{ti}^{(1)}}
\newcommand{\mutwo}{\hat{\mu}_{ti}^{(2)}}
\newcommand{\muthree}{\hat{\mu}_{ti}^{(3)}}
\newcommand{\sumst}{\sum_{s=1}^t}
\newcommand{\sumiK}{\sum_{i=1}^K}
\newcommand{\muopt}{\mu_{i^*}}
\newcommand{\RV}[1]{#1}
\title[Resources Allocation]{Allocating Divisible Resources on Arms with Unknown and Random Rewards}
\begin{document}
	
	\maketitle
	\begin{abstract}
		We consider a decision maker allocating one unit of renewable and divisible resource in each period on a number of arms. The arms have unknown and random rewards whose means are proportional to the allocated resource and whose variances are proportional to an order $b$ of the allocated resource.
		In particular, if the decision maker allocates resource $A_i$ to arm $i$ in a period, then the reward $Y_i$ is
		$Y_i(A_i)=A_i \mu_i+A_i^b \xi_{i}$,
		where $\mu_i$ is the unknown mean and the noise $\xi_{i}$ is independent and sub-Gaussian.
		When the order $b$ ranges from 0 to 1, the framework smoothly bridges the standard stochastic multi-armed bandit and online learning with full feedback. We design two algorithms that attain the optimal gap-dependent and gap-independent regret bounds for $b\in [0,1]$, and demonstrate a phase transition at $b=1/2$. The theoretical results hinge on a novel concentration inequality we have developed that bounds a linear combination of sub-Gaussian random variables whose weights are fractional, adapted to the filtration, and monotonic.\footnote{Extended abstract accepted for presentation at the Conference on Learning Theory (COLT) 2023}
	\end{abstract}
	\begin{keywords}%
		renewable and divisible resource allocation, stochastic multi-armed bandit, gap-dependent (independent) regret
	\end{keywords}

	\section{Introduction}
		Consider the following three real-world examples of online decision-making problems.	
	\begin{example}[Time Management]\label{exa:TM}
		A PhD student needs to decide how to allocate her time in a day. 
		She can spend five hours exploring a few research directions and the rest on coursework.
		But the payoff of studying the subjects is unknown and depends on the time invested.
		Fortunately, when the sun rises again, she can plan her second day based on the feedback she has obtained from the first day, and the resource (a day's time in this case) is renewed.
		The objective is to find the best project to spend time on in the long run and maximize the reward in terms of intellectual fulfillment.  
	\end{example}
	
	\begin{example}[Venture Capital Investment]\label{exa:VC}
		A venture investor raises a fixed amount of funds and invests it into multiple startup companies every month. 
		Initially, the average returns of the companies are unknown.
		The performance and thus the monthly return of the projects depend on the capital invested.
		The objective is to learn the value of the companies and divert the fund to the project(s) with the highest returns in the long run.
	\end{example}
	
	\begin{example}[Vaccination Allocation]\label{exa:VA}
		A government agency allocates COVID test kits to different cities to control disease transmission. 
		The total number of testing kits available every day is fixed and
		the ``reward'' is the number of positive cases identified by the tests.
		The agency needs to learn the prevalence of each city by observing the test results
		and allocate more test kits to the cities with higher infection rates.
	\end{example}
	At a high level, all three examples share a few common features.
	The decision-making process is dynamic;
	in each period, there is a fixed amount of \emph{renewable} resource (time, money, test kits) that can be \emph{divided} and allocated to multiple arms (projects, companies, cities);
	the arms generate random rewards whose means are unknown initially and depend on the allocated resource. 
	
	Except for the divisible resources, the problem is similar to the stochastic multi-armed bandit (SMAB) problem studied extensively in the literature.
	In SMAB, in each period, the decision maker has one unit of renewable resource, allocates the resource to exactly one of the $K$ arms, and observes the reward. 
	However, in the above three examples, the divisible resources are a key feature that cannot be ignored.
	This feature introduces fundamental and complex changes to SMAB: 
	the decision space becomes uncountably infinite in each period;
	how the reward of an arm depends on the allocated resource also becomes an important characteristic of the problem.
	In this paper, we tackle this decision-making problem by proposing a framework to incorporate divisible resources into SMAB.
	In particular, if the decision maker allocates resource $\tA \in [0,1]$ to arm $i \in [K]$ in period $t$, then the reward $\tY$ is
	$\tY(\tA)=\tA \mu_i+\tA^b \xi_{ti}$,
	where $\mu_i$ is the unknown mean and the noise $\xi_{ti}$ is independent and sub-Gaussian with parameter $\sigma$. 
	Thus, the expected reward is proportional to the allocated resource, while the scale of the noise is proportional to an order $b$ of the resource.	
	As we shall see, $b$ reflects the signal-to-noise ratio (SNR) of the reward.
	
	Developing theories upon the framework, this paper makes the following contributions to the literature.
	First, we develop two algorithms for the problem, inspired by the design principles of successive elimination and $\epsilon$-greedy algorithms, for the gap-independent and gap-dependent regret, respectively.
	We show that the algorithms attain the optimal rate of gap-independent and gap-dependent regret for $b\in(0,1)$. (See Table~\ref{tab:regret-summary} for the regret rates.)
	The regret leads to a number of interesting findings.
	(1) the regret displays completely different behavior for $b\le 1/2$ and $b>1/2$ and thus phase transition at $b=1/2$.
	(2) the gap-dependent regret is $O(\log T)$ for $b\le 1/2$ and \emph{finite} for $b>1/2$. 
	For the gap-independent bound, a larger $b>1/2$ reduces the regret in terms of the order of $K$ but not $T$.
	(3) the regret smoothly bridges that of SMAB for small SNR ($0\le b\le 1/2$) and that of online learning with full feedback for large SNR ($b=1$).
	
	Second, in the theoretical analysis, we establish a novel concentration inequality that bounds a linear combination of sub-Gaussian random variables whose weights are fractional and monotonic, which has not been discovered in the literature. 
	In particular, consider independent $\sigma$-sub-Gaussian random variables $\{\xi_s\}_{s=1}^t$ and a sequence of deterministic weights $\{a_s\}_{s=1}^t$ such that $0\le a_1\le a_2\le \ldots\le a_t$.
	We prove that for any constant $\epsilon>0$, we have
	\begin{equation}\label{eq:mono-conc-ineq}
		\PR\left( \sup_{0\le a_1\le \ldots\le a_t}\frac{\sum_{s=1}^t a_{s} \xi_{s}}{\sqrt{\sum_{s=1}^t a_{s}^2}} \ge  \sqrt{\frac{3}{2}}   (\log t) \epsilon \right) \le \exp\left(-\frac{\epsilon^2}{2 \sigma^2}\right).
	\end{equation}
	The supremum allows the concentration inequality to be adapted to random weights.
	To draw connections to standard concentration inequalities, we note that for a given sequence $\{a_s\}_{s=1}^t$, one has 
	\begin{equation}\label{eq:standard-conc-ineq}
		\PR\left( \frac{\sum_{s=1}^t a_{s} \xi_{s}}{\sqrt{\sum_{s=1}^t a_{s}^2}} \ge \epsilon\right) \le \exp\left(-\frac{\epsilon^2}{2 \sigma^2}\right).
	\end{equation}
	Thus, taking the supremum over the set $\{0\le a_1\le \dots\le a_t\}$ doesn't inflate the concentration probability significantly,
	because both are concentrated within $O((\log t)^{3/2})$.
	On the other hand, if we remove the monotonicity constraint, the random quantity is significantly less concentrated: 
	\begin{equation}\label{eq:nonmono-conc-ineq}
		\EX\left( \sup_{a_1, \ldots,  a_t\ge 0}\frac{\sum_{s=1}^t a_{s} \xi_{s}}{\sqrt{\sum_{s=1}^t a_{s}^2}}\right) =O\left(\|\bm{\xi}\|_2\right)=O(\sqrt{t})
	\end{equation}
	Therefore, monotonicity is the key to the concentration probability.
	To prove the inequality, we use semidefinite programming and other techniques. See Proposition~\ref{prop:concen-mono} for details. 
	To summarize the challenges in the analysis, note that the union bound combined with \eqref{eq:standard-conc-ineq} is not sufficient for \eqref{eq:mono-conc-ineq}, because the supremum is taken over an infinite set.
	A common workaround in nonparametric statistics is to use the covering number and apply the union bound to the centers of the covering balls. 
	However, the covering number of the monotonic set $\{0\le a_1\le \dots\le a_t\le 1\}$ (properly normalized) is not qualitatively smaller than that of the hypercube $[0,1]^t$ to give a much sharper concentration inequality (\eqref{eq:nonmono-conc-ineq} versus \eqref{eq:mono-conc-ineq}).
	The concentration inequality \eqref{eq:mono-conc-ineq} and its analysis may be of independent interest.
	\begin{table}[t]
		\caption{Summary of regret bounds}
		\label{tab:regret-summary}
		\centering
		\begin{tabular}[c]{lcc}
			\toprule
			& Gap-independent & Gap-dependent \\
			\midrule
			SMAB ($b=0$) \citep{auer2002finite} & $O\left(\sqrt{TK}\right)$ & $O\left(\log T \sum_{i}\Delta_i^{-1}\right)$ \\
			$b \in \left(0,1/2\right]$ (this work)  & $O\left(\sqrt{TK}\right)$ & $O\left(\log T \sum_{i} \Delta_i^{-1}\right)$ \\
			$b \in (1/2,1)$ (this work) &  $O\left(\sqrt{T} K^{1-b}\right)$ & $O(1)$ \\
			Full feedback ($b=1$) \citep{degenne2016anytime} & $O\left(\sqrt{T\log K}\right)$ &  $O(1)$\\ 
			\bottomrule
		\end{tabular}
	\end{table}
	
	\subsection{Related Work}
	Our paper is closely related to the literature on SMAB \citep{auer2002finite,bubeck2012regret,lattimore2020bandit,slivkins2019introduction,agrawal21a} and multi-armed bandit with full-information feedback \citep{degenne2016anytime,slivkins2019introduction,zhang2019online,huang22a},
	as our framework smoothly bridges the two settings.
	There are works studying the middle ground of the two settings:  
	\citet{degenne2018bandits} assumes the decision maker has extra and free observations from time to time; \citet{bubeck2013prior,locatelli2016optimal,yang2020optimal} suppose the decision maker has prior knowledge on the optimal mean reward or the suboptimality gaps. 
	\RV{\citet{wu2015online} study a semi-bandit setting where the learner observes the rewards of other arms with the same mean.}
	However, these works consider indivisible resources like SMAB and special cases between the two settings. 
	The literature on MAB with knapsacks consider the resource constraints in online learning problem \citep{agrawal2014bandits,agrawal2016linear, badanidiyuru2018bandits,merlis20a,kesselheim20a,sivakumar2022smoothed}. 
	Different from this work, the resource constraint is imposed over the time horizon. 
	
	The literature on exponential bandits \citep{keller2005strategic, keller2010strategic, chen2020multi,marlats2021strategic} studies the allocation of renewable and divisible resources to multiple arms. 
	The setting, however, is quite different: the decision maker terminates the process as soon as an arm generates a positive reward.
	A Bayesian framework is commonly adopted to characterize the optimal policy.
	Our paper differs from this literature in the problem setting and the performance measure. \RV{\citet{mandelbaum1987continuous} study a continuous-time setting of our problem where all the arms share the divisible resources. However, their objective is to provide a policy that maximizes the discounted reward, and they demonstrate that the optimal policy adopts a Gittins-Index structure. In contrast, our paper focuses on a different reward structure, and our objective is to develop algorithms to achieve the optimal rate of regret. }
	
	Our work is mostly related to the literature on online resource allocation with semi-bandit feedback \citep{lattimore2014optimal,lattimore2015linear,dagan2018better,verma2019censored,fontaine2020adaptive,sherman21a}. 
	Like our work, this literature focuses on the online allocation of renewable and divisible resources to multiple arms with a resource constraint in each period. 
	However, the reward models in these papers are quite different from ours. 
	\citet{lattimore2014optimal,lattimore2015linear,dagan2018better} assume that the reward $\tY(\tA)$ follows a Bernoulli distribution with a mean of $\min\{1,\tA/ \mu_i\}$, motivated by the problem of allocating the computing resources (cache, bandwidth, CPU, etc) to multiple processes. 
	They show the optimal gap-dependent and gap-independent regret grow at a rate of $O(\log T/ \Delta_i)$ and $O(\sqrt{KT})$, which matches SMAB. 
	\citet{verma2019censored} further consider an additional threshold parameter $\theta_i$ in the reward model. 
	\citet{fontaine2020adaptive,sherman21a} extend the reward model to general concave functions. 
	Our work differs from these papers in terms of the reward model, algorithm design and regret rate. 
	We adopt a constant $b$ to reflect the SNR, which allows the framework to smoothly bridges SMAB and full feedback. 
	The value of $b$ dictates the algorithmic choice and optimal regret, and the phase transition at $b=1/2$ is a unique feature of our model.

	\section{Problem Formulation}\label{sec:formu}
	We consider a resource allocation problem with $K$ arms and a time horizon of length $T$. At each period $t\in \left\{1,2,\ldots,T\right\}$, the decision maker allocates a resource $\tA\in[0,1]$ to arm $i\in[K]=\{1,2,\ldots,K\}$ subject to the constraint that the total amount of the resource is one, i.e., $\sum_{i=1}^K \tA = 1$. For each arm $i$, a reward $\tY$ is observed and collected.
	The mean of $\tY$ is proportional to the allocated resource $\tA$ and the variance is proportional to an order $b$ of the resource, given by the following equation, 
	\begin{equation}\label{equ:def-feedback}
		\tY(\tA)=\tA \mu_i+\tA^b \xi_{ti},
	\end{equation}
	where the constant $\mu_i \in \left[0,\infty\right)$ is the mean reward of arm $i$ and the noises $\{\{\xi_{ti}\}_{t=1}^T\}_{i=1}^K$ are assumed to be independent and sub-Gaussian with parameter $\sigma$. The use of sub-Gaussian noises are standard assumptions in online learning problems. 
	
	\textbf{Policy.} The decision maker does not know $\{\mu_i\}_{i=1}^K$ initially. At each period $t \in [T]$, she uses the past history 
	\begin{equation}\label{equ:his-def}
		H_t=\left\{A_{11},A_{12},\ldots,A_{1K},Y_{11},\ldots,Y_{1K},\ldots,A_{t-1,1},\ldots,A_{t-1,K},Y_{t-1,1},\ldots,Y_{t-1,K}\right\},
	\end{equation} 
	to implement a the policy $\pi_t$, which is a mapping from $ \mathbb{R}^{2(t-1)K}$ to $[0,1]^K$. The policy determines the allocation of resources $\tA=(\pi_t(H_t))_i$ to arm $i$ at period $t$.
	
	\textbf{Regret.} The goal is to maximize the expected total reward collected over all periods: 
	\begin{equation*}
		\EX\left[\sum_{t=1}^T \sum_{i=1}^K \tY(\tA)\right]= \EX\left[\EX\left[\sum_{t=1}^T \sum_{i=1}^K \tY(\tA) \bigg| H_t\right]\right]= \EX\left[\sum_{t=1}^T \sum_{i=1}^K \tA \mu_i\right].
	\end{equation*}
	As in SMAB, we benchmark the expected total reward against the total reward of the best arm $T \max_{i \in [K]} \mu_i$ if $\{\mu_i\}_{i=1}^K$ were known. Therefore, we define the best arm $i^*=\argmax_{i \in [K]} \mu_i$ and the regret of a policy $\pi$  
	\begin{equation}\label{equ:def-regret}
		R_{\pi}(T)=\sum_{t=1}^T \EX\left[ \mu_{i^*}-\sum_{i=1}^K \tA \mu_i\right]=T \mu_{i^*}-\EX\left[\sum_{t=1}^T \sumiK \tA \mu_i\right],
	\end{equation}
	The objective of the decision maker is thus designing a policy that achieves small regret.
	In the literature, there are two types of regret bounds for gap-independent and gap-dependent cases. 
	We provide a summary here.
	In the gap-independent case, we focus on the minimization of \eqref{equ:def-regret} for an arbitrary problem instance, including $(\mu_1,\dots,\mu_K)$ and the noise distribution.
	As a result, the regret depends only on $T$ and $K$.
	In the gap-dependent case, the mean reward $(\mu_1,\dots,\mu_K)$ is fixed but unknown to the decision maker. The regret typically depends on the suboptimality gap $\Delta_i\coloneqq \mu_{i^*}-\mu_i$, in addition to $T$ and $K$.

	\RV{\textbf{Motivation for the Reward Structure}. To motivate the reward structure in \eqref{equ:def-feedback}, we can think of period $t$ in our model as a batch of many bandit decision epochs. The proportion of decision epochs dedicated to arm $i$ is $A_{ti}$. If the random reward in each decision epoch is independent, identically distributed, and has a finite variance (normalized by the epoch length), then by the central limit theorem (CLT), the total reward from arm $i$ follows \eqref{equ:def-feedback} with $b=1/2$  when we take the epoch length to zero. To see this, let's suppose there are $n$ epochs in period $t$. If we allocate all the $n$ epochs to arm $i$, we have $A_{ti}=1$ and $\EX[\tY]=\mu_i$, $\var(\tY) = \sigma^2$. 
	Then, we consider the random reward in each epoch, denoted by $\{X_j\}_{j=1}^n$. Since $X_j$ are i.i.d. and $\sum_{j=1}^n X_j=\tY$, 
	we have $\EX[X_j]=\EX[\tY]/n=\mu_i/n$, $\var(X_j) =\var(\tY)/n = \sigma^2/n$. If allocating $n_i$ epochs to arm $i$, we have $A_{ti}=n_i/n \le 1$ and $Y_{ti}=\sum_{j=1}^{n_i} X_j$. Let $n \rightarrow \infty$ while maintaining $A_{ti}$ as a constant. Then, by the CLT, we have $Y_{ti}=\sum_{j=1}^{n_i} X_j \overset{d}{\rightarrow} \mu_i n_i/n + \sqrt{n_i} N(0, \sigma^2 /n)=A_{ti} \mu_i+ \sqrt{A_{ti}} N(0,\sigma^2)$, which follows \eqref{equ:def-feedback} with $b=1/2$.}
	
	\RV{Therefore, when $b=1/2$ and $n \rightarrow \infty$, the reward $Y_{ti}$ of arm $i$ can be interpreted as a Brownian motion with drift, where the increment is a normal random variable $N(\mu_i \Delta t,\sigma^2 \Delta t)$, and the allocated resource $A_{ti}$ specifies the time horizon of the Brownian motion. Note that the Brownian motion can be normalized within the unit time horizon using the self-similar property $B(a) \overset{d}{=} \sqrt{a} B(1)$, which implies that 
	after a time duration $a$, the distribution of the Brownian motion is equivalent to multiplying the Brownian motion observed at the unit time by $\sqrt{a}$.
	}
	
	\RV{However, in practical applications,  particularly in financial markets, the increments of the reward may exhibit correlation, which is referred to as Fractional Brownian motion, or are heavy-tailed distributed, known as the stable Levy process. The Hurst parameter $H \in [0,1]$ (see Chapter 4 of \citet{shiryaev1999essentials}) is employed to capture the correlation of increments or the decay rate of the heavy-tailed increments. Both of the models adhere to the self-similar property $B(a) \overset{d}{=} a^H B(1)$, where the scale is generalized to $a^H$ from $a^{1/2}$. In line with this notion, we adopt the parameter $b$ to specify the magnitude of random fluctuations.}
	
	We further remark on the interpretation of $b$ in \eqref{equ:def-feedback}.
	Since $b>0$, the scale of noise $\tA^b$ increases in $\tA$.
	This makes sense, as allocating more resources to an arm results in more randomness.
	On the other hand, because $b<1$, the \emph{normalized} scale of the noise $\tA^b/\tA=\tA^{b-1}$ is decreasing and convex in $\tA$.
	It implies that the information acquisition is convex, 
	as allocating more resources to an arm allows for more information to be aquired than if the resources were divided evenly.
	Moreover, because $\tA\le 1$, a larger $b$ represents a less noisy environment.
	So the value of $b$ reflects the signal-to-noise ratio and characterizes how the scale of noise $\tA^b$ changes with the allocated resource $\tA$.
	
	\RV{\textbf{Connection with MAB Literature}.} The constant $b$ also enables our formulation to be connected to \emph{the standard stochastic multi-armed bandit (SMAB) and online learning with full feedback}.
	When $b\to 1$, even if $\tA\ll 1$ is small, the decision maker can observe $\tY/\tA=\mu_i+\xi_{ti}$, which is as good as $\tA=1$.
	Therefore, by allocating infinitesimal resources to all the arms, the decision maker can obtain the full feedback \citep{degenne2016anytime,slivkins2019introduction}.
	Conversely, when $b\to 0$, the signal-to-noise ratio becomes very small and the normalized scale $\tA^{-1}$ becomes the most convex.
	In this case, any $0<\tA<1$ is suboptimal in terms of exploration, and the optimal regret matches that of SMAB.
	Therefore, the policy that attains the optimal regret allocates binary $\tA$s (see Section \ref{sec:gap-indep-regret-low}).  
	It's worth noting that when $\tA\in\{0,1\}$, our formulation is equivalent to SMAB and $b$ no longer plays a role.

	 \RV{Because of the connection to SMAB, the regret bound of SMAB can provide benchmarks for our problem.
	 In particular, if we restrict the policy space to be $\{\tA: \sum_{i=1}^K \tA=1, \tA \in \{0,1\}\}$, then our problem is identical to SMAB.
	 Since the policy space is larger, i.e., $\tA \in [0,1]$ instead of $\{0,1\}$, the regret bound of SMAB serves as a lower bound for our problem.
	 Specifically, it is well known that (see, e.g., \citealt{lattimore2020bandit}) the optimal gap-independent regret bound  for SMAB is $O(\sqrt{KT})$,	
	 and the optimal gap-dependent regret bound is $O\left(\sum_{i=1}^K \Delta_i+\log T/\Delta_i\right)$ where $\Delta_i$ is the suboptimality gap of arm $i$.
	 In the next two sections, we analyze the regret bounds of our problem in both scenarios and show when the regret bounds of SMAB are tight for our problem.}
	
	 \RV{Before getting into the analysis, we provide a toy example to shed light on the design of the algorithm. 
	 The key tradeoff in the algorithm is whether to pool the resource on a single arm or evenly divide the resource among multiple arms in one period.
	 In the toy examples, we consider $K$ arms over $K$ periods. Suppose that the noise in \eqref{equ:def-feedback} follows a standard normal distribution. 
	 We compare two simple allocation strategies: exploring one arm in turn in a period or dividing the resource evenly among all the arms in a period.
	 For the former strategy, it is easy to see that the estimator for the mean reward $\hat{\mu}_{1i}-\mu_i \sim N(0,1)$, as it is based on a single sample from the normal distribution.	
	 If we choose to divide, the allocated resources $A_{1i}=\ldots=A_{Ki}=1/K$ and the mean estimator $\hat{\mu}_{Ki}-\mu_i= \sum_{t=1}^K \xi_{ti}/K^b \sim N(0,K^{1-2b})$. 
	 The allocation strategy with a smaller variance is preferable. 
	 Comparing $K^{1-2b}$ with $1$, we find when $b>1/2$ ($<1/2$), it is better to divide (pool) the resource.
	 This toy example illustrates the role of $b$, why phase transition happens at $b=1/2$, and that the algorithms should be designed differently for $b>1/2$ and $b<1/2$.
	 This intuition guides us through the analysis in the next two sections.}

	\section{Gap-independent Regret Bound}\label{sec:gap-indep-regret}

	\subsection{Lower Bound} \label{sec:gap-indep-regret-low}
	We first show the lower bound for the regret of the gap-independent case.  
	\begin{theorem}\label{thm:gap-ind-low}
		Suppose $T \ge K$ and $\xi_{ti} \sim N(0,1)$. Then we have the gap-independent lower bound
		\begin{equation*}
			R_\pi(T) \ge \left\{
			\begin{array}{lc}
				\frac{1}{27} \sqrt{(K-1)T}, &  \text{for} \ 0 \leq b \le 1/2, \\
				\frac{1}{27} (K-1)^{1-b}\sqrt{T} &  \text{for} \ 1/2 \le b \le 1.
			\end{array}
			\right.
		\end{equation*}
	\end{theorem}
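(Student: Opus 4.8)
The plan is to use the standard change-of-measure (needle-in-a-haystack) machinery for bandit lower bounds, where the only genuinely new ingredient is the computation and control of the Kullback--Leibler (KL) divergence induced by the reward model \eqref{equ:def-feedback}: the fractional exponent $b$ enters the divergence directly and is what produces the phase transition at $b=1/2$.

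First I would fix an arbitrary policy $\pi$ and construct a family of Gaussian instances with arm $1$ as a reference. In the base instance $\mu$ set $\mu_1=\Delta$ and $\mu_i=0$ for $i\ne 1$, so arm $1$ is optimal; for each $i\in\{2,\dots,K\}$ let $\mu^{(i)}$ raise arm $i$'s mean to $2\Delta$, making arm $i$ optimal. Writing $N_i=\sum_{t=1}^T A_{ti}$ for the total resource allocated to arm $i$, the budget identity $\sum_i A_{ti}=1$ gives $\sum_i N_i=T$, and a short computation (using $A_{t1}\le 1-A_{ti}$ in the second case) shows the regret is at least $\Delta$ times the allocation shortfall of the optimal arm, namely $R_\pi(T;\mu)\ge \Delta(T-\EX_\mu[N_1])$ and $R_\pi(T;\mu^{(i)})\ge \Delta(T-\EX_{\mu^{(i)}}[N_i])$.

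The crux is the KL divergence between the trajectory laws $\PR_\mu$ and $\PR_{\mu^{(i)}}$. The policy is identical under both measures, so only the conditional law of $Y_{ti}$ given $A_{ti}$ differs, and since $Y_{ti}\mid A_{ti}\sim N(A_{ti}\mu_i,\,A_{ti}^{2b})$, the chain rule for KL divergence yields
\begin{equation*}
	D_{\mathrm{KL}}(\PR_\mu\Vert\PR_{\mu^{(i)}})=\EX_\mu\!\left[\sum_{t=1}^T \frac{(2\Delta A_{ti})^2}{2A_{ti}^{2b}}\right]=2\Delta^2\,\EX_\mu\!\left[\sum_{t=1}^T A_{ti}^{2-2b}\right].
\end{equation*}
The exponent $2-2b$ on the allocation is exactly where the two regimes split. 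For $b\le 1/2$ we have $2-2b\ge 1$ and $A_{ti}\in[0,1]$, so $A_{ti}^{2-2b}\le A_{ti}$ and the bracket is at most $\EX_\mu[N_i]$. For $b\ge 1/2$ the exponent lies in $[0,1]$, so $z\mapsto z^{2-2b}$ is concave and two applications of Jensen's inequality (first over the $T$ periods, then over the expectation) give $\EX_\mu[\sum_t A_{ti}^{2-2b}]\le T^{2b-1}(\EX_\mu[N_i])^{2-2b}$. Choosing $i$ to be the arm in $\{2,\dots,K\}$ allocated least under $\mu$ forces $\EX_\mu[N_i]\le T/(K-1)$, whence $D_{\mathrm{KL}}\le 2\Delta^2 T/(K-1)$ when $b\le1/2$ and $D_{\mathrm{KL}}\le 2\Delta^2 T/(K-1)^{2-2b}$ when $b\ge1/2$.

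Finally I would close with the Bretagnolle--Huber inequality applied to the event $\{N_1>T/2\}$: on its complement $\pi$ underserves arm $1$ under $\mu$, and on the event it underserves arm $i$ under $\mu^{(i)}$ (since $N_i\le T-N_1<T/2$), giving
\begin{equation*}
	R_\pi(T)\ge\max\{R_\pi(T;\mu),R_\pi(T;\mu^{(i)})\}\ge \frac{\Delta T}{8}\exp\!\big(-D_{\mathrm{KL}}(\PR_\mu\Vert\PR_{\mu^{(i)}})\big).
\end{equation*}
Setting $\Delta=\tfrac12\sqrt{(K-1)/T}$ in the first regime and $\Delta=\tfrac12(K-1)^{1-b}/\sqrt{T}$ in the second makes each KL bound equal to $1/2$, and since $\tfrac{1}{16}e^{-1/2}\ge \tfrac{1}{27}$ the two stated bounds follow (the worst-case regret $R_\pi(T)$ dominates the maximum). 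I expect the main obstacle to be the KL step: one must justify the trajectory-level divergence decomposition for continuous, history-adapted allocations (including the degenerate case $A_{ti}=0$, where both conditional laws collapse to a point mass) and, in the high-SNR regime $b>1/2$, supply the concavity argument that converts $\sum_t A_{ti}^{2-2b}$ into a bound depending only on $\EX_\mu[N_i]$. The remaining optimization over $\Delta$ and the constant-chasing to reach $1/27$ are routine.
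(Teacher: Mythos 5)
Your proposal is correct and follows essentially the same route as the paper: the same two-instance construction with the least-allocated arm boosted to $2\Delta$, the same KL decomposition yielding the factor $\EX_\mu\left[\sum_t A_{ti}^{2-2b}\right]$ (the paper's Lemma~\ref{lem:diver-decom} supplies the trajectory-level justification you flag as the main obstacle), the same case split via $A_{ti}^{2-2b}\le A_{ti}$ for $b\le 1/2$ and Jensen for $b\ge 1/2$, and the same Bretagnolle--Huber step with matching choices of $\Delta$ and constants. No gaps.
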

	According to Theorem \ref{thm:gap-ind-low}, when $b \le 1/2$, the regret lower bound $O(\sqrt{KT})$ is equivalent to the SMAB problem.
	This implies that the standard algorithms such as UCB can be used by forcing $\tA\in \{0,1\}$, resulting in an optimal regret.
	Hence, Theorem \ref{thm:gap-ind-low} provides both an algorithm and a tight upper bound for the gap-independent case when $b\le 1/2$. 
	On the other hand, when $b>1/2$, the lower bound  $O(K^{1-b} \sqrt{T})$ is smaller than $O(\sqrt{KT})$, suggesting that the standard SMAB algorithms can be improved.
	
	The proof of Theorem \ref{thm:gap-ind-low} mainly follows from the Le Cam's method \citep{lecam1973convergence,yu1997assouad}.
	The major difference from the standard analysis in SMAB is that the KL-divergence depends on $B_{Ti} \coloneqq \sum_{t=1}^T \tA^{2-2b}$ instead of the ``total number of pulls'', $S_{Ti} \coloneqq \sum_{t=1}^T \tA$. 
	To see this, recall that conditional on $\tA$, the random reward for arm $i$ is $\tY(\tA)=\tA \mu_i+\tA^b \xi_{ti} \sim N(\tA \mu_i, \tA^{2b}) \coloneqq P_{A_{ti}}$. Considering another problem instance where the mean reward of arm $i$ is $\mu_{i}'$, the random reward follows the distribution $P'_{A_{ti}}=N(\tA \mu_i', \tA^{2b})$. The KL-divergence between $P_{A_{ti}}$ and $P'_{A_{ti}}$ is 
	\begin{equation*}
		KL(P_{A_{ti}}, P'_{A_{ti}})=A_{ti}^2 (\mu_{i}-\mu_{i}')^2/(2 A_{ti}^{2b})=A_{ti}^{2-2b} (\mu_{i}-\mu_{i}')^2/2 =A_{ti}^{2-2b} KL(P_i,P_i'),
	\end{equation*}
	where the two distributions $P_i \sim N(\mu_i,1), P_{i'} \sim N(\mu_i',1)$.  
	Therefore, the KL-divergence depends on $B_{Ti}$ instead of $S_{Ti}$.
	
	To see why there is a phase transition at $b=1/2$ in the lower bound, we compare $B_{Ti}$ with $S_{Ti}$. 
	When $b \le 1/2$, we have $\tA^{2-2b} \le \tA$ and $B_{Ti} \le S_{Ti}$, which implies that the KL-divergence of the sample path will be no larger in our problem than in SMAB. 
	However, when $b > 1/2$, we have $\tA^{2-2b} \ge \tA$ and $B_{Ti} \ge S_{Ti}$, which allows us to obtain a smaller lower bound than SMAB due to the increased KL-divergence.
	
	\subsection{Algorithm for the Gap-independent Case}\label{sec:alg-ind}
	Next, we design algorithms that achieve the regret lower bound in Theorem~\ref{thm:gap-ind-low}.
	Note that when $b \le 1/2$, we have shown that the regret lower bound matches that of SMAB. 
	So standard SMAB algorithms such as UCB already achieve the lower bound in our problem. 
	In this section, we focus on the bound $O(K^{1-b} \sqrt{T})$ when $b \in \left(0.5,1\right]$.
	
	Before we present the algorithm, it's worth noting that there are multiple estimators for the mean reward estimation in our problem, as opposed to the sample mean used in SMAB.
	
	\textbf{Mean Estimators.}
	Recalling the definition of the random reward $\tY$ in \eqref{equ:def-feedback}, the first estimator sums up the random rewards and then divides it by the sum of resources: 
	\begin{equation}\label{equ:mean-one}
		\muone=\frac{\sumst \sY}{\sumst \sA}=\mu_i+\dfrac{\sumst \sA^{b} \xi_{si}}{\sumst \sA}.
	\end{equation}
	The second mean estimator divides $\tY$ by $\tA$ and then sums up all the normalized rewards:
	\begin{equation}\label{equ:mean-two}
		\mutwo=\frac{1}{\sum_{s=1}^t \Ind (\sA>0)} \sum_{s=1}^t \frac{\sY}{\sA} \Ind(\sA>0)=\mu_i+\frac{1}{\sum_{s=1}^t \Ind (\sA>0)} \sum_{s=1}^t \sA^{b-1} \xi_{si} \Ind(\sA>0).
	\end{equation}
	It is easy to see that both estimators are unbiased if $\{A_{si}\}_{s=1}^t$ were a deterministic sequence.
	\RV{Note that in most cases, $\muone \neq \mutwo$ for the same history $H_t$. For example, considering $t=2$ and $A_{1i}=1/2$, $A_{2i}=1/3$, $Y_{1i}=2$, $Y_{2i}=3$, we have $\mutwo=6$ and $\muone=6.5$. However, when $\tA$ is restricted to $\{0,1\}$, the two estimators are equivalent, i.e,
	 \begin{equation}\label{equ:SMAB-mean}
		 \muone=\mutwo=\mu_i+\dfrac{\sumst \xi_{si} \Ind(\sA=1)}{\sumst \Ind (\sA=1)} \coloneqq \hat{\mu}_{ti},
		 \end{equation} 
	 which is the sample mean in SMAB.}
	
	
	We investigate the standard error of the estimators, which is crucial in the construction of confidence intervals.
	Note that $\muone$ and $\mutwo$ have different variances even for a given deterministic sequence $\{A_{si}\}_{s=1}^t$. 
	To see this, suppose the noise $\xi_{si} \sim N(0,1)$ and denote the total number of pulls $\tS = \sumst \sA$. 
	In the sample mean used in SMAB, we have $\hat{\mu}_{ti}-\mu_i \sim N(0,1/S_{ti})$. However, for $\muone$, we have
	\begin{equation}\label{equ:def-R2}
		\muone-\mu_i \sim N(0,1/\tRone), \ \text{where} \  \tRone=\dfrac{\left(\sumst \sA\right)^2}{\sumst \sA^{2b}}.
	\end{equation}
	For $\mutwo$, we have
	\begin{equation}\label{equ:def-R1}
		\mutwo-\mu_i \sim N(0,1/\tRtwo), \ \text{where} \  \tRtwo=\dfrac{\left(\sum_{s=1}^t \Ind (\sA>0)\right)^2}{\sumst \sA^{2b-2} \Ind (\sA>0)}.
	\end{equation} 
	Thus $\tRone$ and $\tRtwo$ affect the variance of $\muone$ and $\mutwo$. 
	If $\tRone$ and $\tRtwo$ are larger than $\tS$, then the mean estimators $\muone$  and $\mutwo$ converge to $\mu_i$ faster than $\hat{\mu}_{ti}$ and thus reduce the regret.
	We use $\muone$ and $\mutwo$ to design the algorithms for matching the gap-independent and gap-dependent bound respectively.
	 \RV{\begin{remark}[Alternative Mean Estimators]\label{remark:alter-estimator}
		 We point out that there are other alternative estimators. We divide the time periods from $1$ to $t$ into $L$ intervals: $[1,t_1],[t_1+1,t_2],\ldots,[t_{L-1}+1,t]$ and define  $t_0=0,t_L=t$.
		 In each interval, we give an estimator using \eqref{equ:mean-one} and then summarize all the estimators using \eqref{equ:mean-two}:
		 \begin{equation}
			 \muthree=\dfrac{1}{\sum_{l=1}^L \Ind \left(\sum_{s=t_{l-1}+1}^{t_l} \sA>0\right)} \left(\sum_{l=1}^L \dfrac{\sum_{s=t_{l-1}+1}^{t_l} \sY}{\sum_{s=t_{l-1}+1}^{t_l} \sA} \Ind \left(\sum_{s=t_{l-1}+1}^{t_l} \sA>0\right) \right).
			 \end{equation}
		 The estimators $\muone$ and $\mutwo$ are special cases of $\muthree$ where $L=1$ and $t$. The total number of possible $\muthree$ is $2^t$ because every period can be the end of an interval. So the choice of intervals will affect the value of $\muthree$, and makes the analysis complicated. In this paper, we use $\muone$ and $\mutwo$ to design algorithms. We leave the analysis of $\muthree$ as a future research direction.
		\end{remark}}
	
	\textbf{Algorithm.}
	The algorithm we propose for the gap-independent case is based on successive elimination (SE). 
	In the first period, the algorithm uniformly allocates the resource to all arms. 
	Over time, it gradually eliminates the suboptimal arms with low mean reward and keeps track of the active set of arms.
	The resource is uniformly allocated to the arms in the active set.
	The elimination rule for the arms is the critical point. We construct a confidence interval using the mean estimator \eqref{equ:mean-one} and compare the upper confidence bound (UCB) of an active arm to the lower confidence bound (LCB) of the current optimal arm. 
	If the UCB is less than the LCB, then we expect the arm is not optimal and should be eliminated. 
	Finally, at the end of $T$ periods, the suboptimal arms will most likely be eliminated, leaving only the true optimal arm. 
	The detailed steps of the algorithm are shown in Algorithm \ref{alg:SE}.
	
	
	\begin{algorithm}[htbp]
		\caption{Successive Elimination with Divisible Resources}
		\label{alg:SE}
		\begin{algorithmic}[1]
			\STATE \textbf{Input}: constant $b \in \left[1/2,1\right),\sigma$
			\STATE Initialize the active set $D\gets [K]$
			\FOR{$t=1,2,\ldots,T$}
			\STATE Allocate $1/|D|$ to each arm in $D$ and observe $Y_{ti}$ for $i \in D$
			\STATE Let $\tRone=\left(\sumst \sA\right)^2 \big/\left(\sumst \sA^{2b}\right)$, \ 
			$CI(t,i)=2 \sqrt{3} \sigma \sqrt{\log t \log T} \bigg/\sqrt{\tRone}$ 
			\STATE Construct $UCB_{ti}=\muone+CI(t,i)$, \  $LCB_{ti}=\muone-CI(t,i)$
			\STATE For $i \in D$, if $UCB_{ti} < \max_{j \in D} LCB_{tj}$, then $D \gets D\setminus\{i\}$
			\ENDFOR
		\end{algorithmic}
	\end{algorithm}
	
	\RV{Compared with the standard SE algorithm for SMAB \citep{even2002pac,even2006action,perchet2013multi,gao2019batched}, the major difference is the construction of the confidence interval.
	 First, only one arm is allowed to select in standard SE, while Algorithm~\ref{alg:SE} explores all the surviving arms in one period by dividing the unit resource, which accelerates the exploration rate.} 
	Second, the confidence interval is $CI_{ti}=O(\sqrt{\log T/\tS})$ in standard SE, whereas in Algorithm~\ref{alg:SE}, it's $O(\log T\big/\sqrt{\tRone})$. 
	As showed in \eqref{equ:def-R2}, the variance of $\muone$ is determined by the sequence $\tRone$ rather than $\tS$. 
	\RV{Also, the $CI_{ti}$ grows in $O(\log t)$ instead of $O(\sqrt{\log t})$. 
	This is because a larger confidence interval is required to make sure the validity of concentration inequality for $\muone$ than the mean estimator for SMAB showed in \eqref{equ:SMAB-mean}.}
	We also remark that Algorithm \ref{alg:SE} is straightforward to implement: it only requires the knowledge of constant $b$ and sub-Gaussian parameter $\sigma$, and the quantities $\muone,\tRone$ can be computed efficiently.

	\begin{theorem}\label{thm:SE-indep-up}
		For $T \ge K$, Algorithm~\ref{alg:SE} achieves the gap-independent regret $O(K^{1-b} \sqrt{T} \log T)$.
	\end{theorem}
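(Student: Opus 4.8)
The plan is to follow the successive-elimination template, but to replace the usual sub-Gaussian tail bound for the sample mean by the monotone concentration inequality of Proposition~\ref{prop:concen-mono}, since that inequality is exactly what controls the estimator $\muone$ under \emph{adaptive} allocations.

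\textbf{Good event.} First I would define the event $\mathcal E$ on which $|\muone-\mu_i|\le CI(t,i)$ holds simultaneously for every arm $i$ still active at every period $t$. The key observation is that while arm $i$ stays in the active set $D$, its allocation is $\sA=1/|D_s|$, and since $|D_s|$ is non-increasing in $s$, the weights $\sA^{b}$ form a \emph{monotone, filtration-adapted} sequence. Writing $\muone-\mu_i=\big(\sum_{s=1}^t \sA^{b}\xi_{si}\big)\big/\sum_{s=1}^t\sA=\frac{\sum_s\sA^b\xi_{si}}{\sqrt{\sum_s \sA^{2b}}}\cdot\frac{1}{\sqrt{\tRone}}$, the supremum in \eqref{eq:mono-conc-ineq} absorbs the data-dependence of the weights, so Proposition~\ref{prop:concen-mono} applied with $\epsilon$ of order $\sigma\sqrt{\log T}$, together with a union bound over the $K$ arms, matches the deviation level to $CI(t,i)=2\sqrt3\sigma\sqrt{\log t\log T}/\sqrt{\tRone}$ and makes $\PR(\mathcal E^c)$ polynomially small in $T$. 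As the discussion around \eqref{eq:nonmono-conc-ineq} stresses, it is the monotonicity, not a covering-number argument, that keeps the inflation down to the extra logarithmic factor appearing in $CI$.

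\textbf{Correctness and per-arm regret.} On $\mathcal E$ the optimal arm $i^*$ is never eliminated, because $UCB_{ti^*}\ge\mu_{i^*}\ge\mu_j\ge LCB_{tj}$ for every $j$. Conversely, a suboptimal arm $i$ still active at $t$ must have $UCB_{ti}\ge LCB_{ti^*}$; since $i$ and $i^*$ received identical allocations in every period they were jointly active, $\tRone$ and hence $CI(t,i)=CI(t,i^*)$ coincide, and the two-sided bound on $\mathcal E$ gives $\Delta_i\le 4\,CI(t,i)$. Because $\sum_i\tA=1$, the regret equals $\EX\big[\sum_{i\ne i^*}\Delta_i S_{Ti}\big]$ with $S_{Ti}:=\sum_{t}\tA$. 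Let $\tau_i$ be the last active period of $i$; substituting $t=\tau_i$ into $\Delta_i\le 4CI(\tau_i,i)$ and using $\tRone=S_{Ti}^2\big/\sum_{s\le\tau_i}\sA^{2b}$ yields, after rearranging and bounding $\log\tau_i\le\log T$, the clean per-arm estimate $\Delta_i S_{Ti}\le C\sigma(\log T)\sqrt{W_i}$ with $W_i:=\sum_{s\le\tau_i}\sA^{2b}$.

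\textbf{Summation across elimination phases.} It remains to bound $\sum_{i\ne i^*}\sqrt{W_i}$. Let $p_i=|D_{\tau_i}|\ge 2$ be the number of active arms when $i$ is eliminated; since $|D_s|\ge p_i$ for all $s\le\tau_i$, one has $\sA\le 1/p_i$ and therefore $W_i\le p_i^{-2b}\,\tau_i\le p_i^{-2b}T$, i.e.\ $\sqrt{W_i}\le p_i^{-b}\sqrt T$. Tracking the active count as it descends through its visited values $K=q_0>q_1>\cdots$, the $q_l-q_{l+1}$ arms eliminated at the transition out of $q_l$ all have $p_i=q_l$, so $\sum_{i\ne i^*} p_i^{-b}=\sum_l q_l^{-b}(q_l-q_{l+1})\le\int_1^K q^{-b}\,dq=O(K^{1-b})$, a Riemann-sum comparison that crucially uses $b<1$. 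Combining the pieces gives regret $O(\sigma K^{1-b}\sqrt T\log T)$ on $\mathcal E$, while $\mathcal E^c$ contributes only lower-order terms through the choice of $\PR(\mathcal E^c)$.

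I expect the main obstacle to be the good-event step: transferring the deterministic-weight inequality \eqref{eq:mono-conc-ineq} into a confidence statement that is uniform over $t$ while keeping the final rate at $\log T$ rather than a higher power. Because the monotone inequality pays a logarithmic factor in the deviation, the union bound over periods must exploit that $CI(t,i)$ is (essentially) monotone in $t$, so that only a sparse grid of periods—together with the $K$ arms—needs to be controlled; this coupling between the concentration inequality's inflation and the allocation's $K^{1-b}$ geometry is the delicate part of the argument.
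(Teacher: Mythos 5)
Your proposal is correct, and its first two stages coincide with the paper's proof: the good event built from Proposition~\ref{prop:concen-mono} (applied pathwise, with the supremum over monotone weights absorbing the adaptivity of $A_{si}=1/|D_s|$), the survival of $i^*$, the bound $\Delta_i\le 4\,CI(t,i)$ using that jointly active arms share the same allocation history, and the per-arm estimate $\Delta_i S_{\tau_i,i}\le C\sigma(\log T)\sqrt{W_i}$ with $W_i=\sum_{s\le\tau_i}A_{si}^{2b}$. Where you genuinely depart is the final summation over arms. The paper proves Lemma~\ref{lem:gap-indep-L-bound}, a bound on $\sum_i\sqrt{W_i}$ valid for arbitrary stopping times, via Cauchy--Schwarz with the telescoping weights $d_i^2=(K-i+1)^{2b}-(K-i)^{2b}$, chosen so that $\sum_i c_i^2d_i^2\le T$ and $\sum_i d_i^{-2}\le K^{2-2b}/(2-2b)$. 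You instead use the cruder pointwise bound $\sqrt{W_i}\le p_i^{-b}\sqrt{T}$ (with $p_i$ the active-set size at elimination) together with the Riemann-sum comparison $\sum_{i\ne i^*}p_i^{-b}=\sum_l (q_l-q_{l+1})q_l^{-b}\le\int_1^K q^{-b}\,dq=O(K^{1-b})$, which is valid since $q\mapsto q^{-b}$ is decreasing and the intervals $(q_{l+1},q_l]$ are disjoint. Your route is looser instance-by-instance (when all suboptimal arms die at $t=1$ it gives $\approx K^{1-b}\sqrt{T}$ while the true value is $\approx K^{1-b}+\sqrt{T}$), but it never exceeds $\tfrac{1}{1-b}K^{1-b}\sqrt{T}$, which is all the theorem needs; it is more elementary and makes the source of the $K^{1-b}$ factor transparent, at the price of a worse constant ($1/(1-b)$ versus the paper's $1/\sqrt{2-2b}$). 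Two points to tighten: (i) suboptimal arms surviving to period $T$ are not covered by your elimination-transition accounting as written; fold them in as a final pseudo-transition from $q_L$ down to $1$, after which the same integral comparison closes. (ii) The obstacle you flag at the end is not actually there: no sparse grid or monotonicity of $CI(t,i)$ is needed, because a plain union bound over all $K$ arms and $T$ periods costs only a factor $KT^2$ against a per-event failure probability of order $T^{-3}$ (take $\delta=T^{-4}$, tail $\le t\delta$), and the resulting $\sqrt{\log(1/\delta)}=O(\sqrt{\log T})$ deviation cost is exactly the $\sqrt{\log T}$ factor already present in $CI(t,i)$; the bad event then contributes only $O(\Delta_{\max}K/T)$ to the regret.
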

	The regret upper bound in Theorem~\ref{thm:SE-indep-up} matches the lower bound in Theorem~\ref{thm:gap-ind-low}, ignoring the $\log T$ term. So the Algorithm \ref{alg:SE} achieves the optimal rate of regret.

	\RV{\begin{remark}[Estimate $b$]
		When $b$ is unknown, we can dedicate a burning period in the beginning of the horizon to estimate it. More precisely, we fix arm $i$ and allocate $A_1=A_{L+1}=a_1$, $ A_2=A_{L+2}=a_2$, …, $A_L=A_{2L}=a_L$, and observe the rewards $Y_t, t=1,\ldots,{2L}$. To offset the mean effect, we compare $Y_t$ with $Y_{t+L}$ and let $D_t \coloneqq Y_{t+L}-Y_t=a_t^b (\xi_{t+L}-\xi_t)$ which follows the distribution of $\sqrt{2} a_t^b \sigma$ sub-Gaussian. Taking logarithm of both sides, we have $\log D_t= b \log a_t+ \log (\sqrt{2} \xi_t)$. Then we can estimate $b$ by regressing $\log D_t$ on $\log a_t$. Although we have not provided a rigorous proof, this procedure is unlikely to increase the regret rate. 
		Moreover, if the decision maker knows a lower bound of b, i.e., $b_0 \in [0.5, b]$, then he can use $b_0$ in the algorithms to achieve the regret $O(\sqrt{T} K^{1-b_0})$ 
		That’s because the misspecified $b$ only enlarges the confidence interval constructed in Algorithm \ref{alg:SE},
		but will not affect the convergence rate of the mean estimator $\hat{\mu}_{ti}^{(1)}$ and $\hat{\mu}_{ti}^{(2)}$.
	\end{remark}}

	\subsection{Analysis of Theorem \ref{thm:SE-indep-up}}
	
	
	\textbf{Process Represented by Stopping Times.} 
	As we mentioned in Section~\ref{sec:formu}, the decision space of our problem in one period is the simplex in $[0,1]^K$, much harder to track than SMAB.
	The benefit of SE is to reduce the decision space to $K$ stopping times. 
	To see this, let $\{\tau_i\}_{i=1}^K \in \{1,2,\ldots,T\}$ be the last period arm $i$ remains active ($T$ for arms surviving to the last period).
	Let $\tau_{(1)} \le \tau_{(2)} \le \ldots \le \tau_{(K)}$ be their order statistics. 
	In period $t \in [1,\tau_{(1)}]$, all $K$ arms are active. 
	For $t \in [\tau_{(1)}+1,\tau_{(2)}]$, one arm is eliminated and $K-1$ arms remain active, and so on.
	From Algorithm \ref{alg:SE}, at least one arm survives in period $T$ so $\tau_{(K)}=T$.
	The resource allocated to arm $i$ at period $t$ is
	\begin{equation}\label{equ:inden-SE-A-tau}
		A_{ti}=\frac{1}{K-j+1} \Ind \left(\tau_{(j-1)}+1 \le t \le \tau_{(j)} \le \tau_i\right),
	\end{equation}
	where $\tau_{(0)}\coloneqq 0$. 
	To analyze the algorithm, we can focus on the smaller space of the stopping times $\bm{\tau}=\{\tau_1,\tau_2,\ldots, \tau_K\} \in \{1,2,\ldots,T\}^K$,
	because of the one-to-one correspondence between $\bm{\tau}$ and the regret.
	\begin{figure}
		     \centering
		     \includegraphics[width=0.7\textwidth]{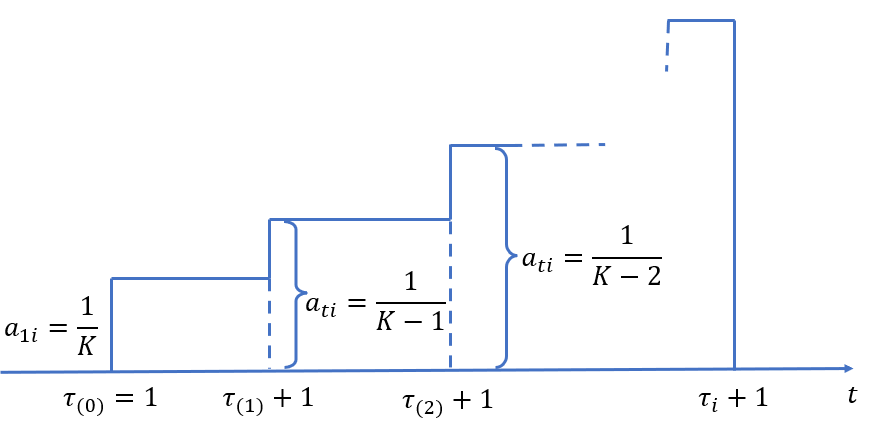}
		     \caption{Relationship between $A_{ti}$ and $\bm{\tau}$.}
		     \label{fig:Stopping_time}
	\end{figure}
	
	\textbf{A New Concentration Inequality.} The stopping times are random variables whose realizations depend on the problem instance 
	and the noise $\{\{\xi_{ti}\}_{i=1}^K\}_{t=1}^T$. 
	It creates challenges for the concentration inequality of $\muone$ showed in \eqref{equ:mean-one},
	which has not appeared in the analysis of SMAB algorithms. More precisely, in the standard concentration result, for independent noise $\{\xi_{si}\}_{s=1}^t$, the sample mean $\sum_{s=1}^t \xi_{si}/t$ grows at the rate $O(\sqrt{\log t/t})$ with a high probability. 
	However, for the mean estimator $\muone$, the allocation $\tA$ in the denominator is also correlated with the noises $\xi_{1i},\ldots,\xi_{t-1,i}$. 
	As a result, the sequence $\muone-\mu_i= \sum_{s=1}^t \sA^b \xi_{si} \big/ \sumst \sA$ cannot be expressed as a sum of martingale differences, and results such as Azuma's or Bernstein's inequalities can not be applied. 
	 To see this, note that
	 	\begin{equation*}
		 	\EX\left[\muone-\mu_i|H_t\right]=\frac{\sum_{s=1}^{t-1} \sA^{b} \xi_{si}}{\sumst \sA}+\frac{\tA^b}{\sumst \sA} \EX[\xi_{ti}]=\frac{\sum_{s=1}^{t-1} \sA^{b} \xi_{si}}{\sumst \sA} \neq \frac{\sum_{s=1}^{t-1} \sA^{b} \xi_{si}}{\sum_{s=1}^{t-1} \sA} = \hat{\mu}_{ti}^{(1)}-\mu_i.
		 	\end{equation*}  
		So the dependence between $\bm{A}_i$ and $\bm{\xi}_i$ makes it difficult to prove the concentration for $\muone$.
	 \begin{remark}[Concentration Inequalities Used in SMAB]
		 If we restrict $\tA$ to zero or one, then $\muone-\mu_i$ reduces to the sample mean $\sumst \xi_{si} \Ind(\sA=1)/ \tS$ for SMAB. 
		 Although the same issue exists ($\tS$ is random),
		 it is common to use a union bound for all $\tS \in \{1,2,\ldots,T\}$ and the concentration probability is typically still well-controlled after being inflated by a factor of $T$. 
		 In our problem, however, since the allocation is continuous, the union bound blows up the concentration probability.
		 This is why we need to develop new approaches in this paper.
		 \end{remark}
	 
	To address this issue, we decouple the dependence between $\bm{A}_i$ and $\bm{\xi}_i$ by considering the worst-case scenario:
	\begin{equation}\label{equ:gap-ind-sup-nor}
		\left|(\muone-\mu_i)\sqrt{\tRone}\right| \le \sup_{\bm{A}_i \in \mathcal{A}} \frac{\sumst \sA^b \xi_{si}}{\sqrt{\sumst \sA^{2b}}}.
	\end{equation}
	where we have normalized the deviation by the standard deviation in \eqref{equ:def-R2}.
	Here $\mathcal{A}$ is the set of all the possible sequences $\bm{A}_i$ from Algorithm \ref{alg:SE} before the elimination of arm $i$, i.e., 
	$\mathcal{A} =\{\left(A_{1i},\ldots,A_{ti}\right): A_{si} \in \{1/K,1/(K-1),\ldots,1\}, \  A_{1i} \le \ldots \le A_{ti}\}$.
	As we note in \eqref{equ:inden-SE-A-tau}, the sequence $\bm{A}_i$ is uniquely determined by the stopping times $\bm{\tau}$. 
	
	It's challenging to give a concentration result for the RHS of \eqref{equ:gap-ind-sup-nor}.  
	Note that the cardinality of $\mathcal{A}$ is in the same order as $ {t+K \choose K} \approx t^K$ if $t \gg K$. 
	Therefore, applying the union bound to $\mathcal A$ introduces an exponential term in $K$ that prohibits us from obtaining the optimal regret.
	Another common technique used in nonparametric statistics to bound such an event is to use covering numbers: decompose $\mathcal A$ into a number of balls and apply the union bound to the ball centers; use continuity to bound the points in each ball.
	However, the covering number of $\mathcal A$ is not much smaller than the covering number of $[0,1]^{t}$, equivalent to the removal of the monotonicity constraint.
	As explained in the introduction, it leads to a much worse concentration probability and we cannot obtain the optimal regret.

	Next, we state our major theoretical result in this section.
	We first introduce a continuous version of $\mathcal A$:
	$\mathcal{A}_m=\{(a_1,a_2,\ldots,a_t): 0<a_1 \le a_2 \le \ldots \le a_t \le 1\}$.
	\begin{proposition}[Concentration Inequality for Monotone Weights]\label{prop:concen-mono}
		For $\mathcal{A}_m$ and independent $\sigma$-sub-Gaussian $\xi_s$ and any constant $\epsilon>0$, we have
		\begin{equation}\label{equ:concen-mono-result}
			\PR\left( \sup_{\bm{a} \in \mathcal{A}_m}\frac{\sum_{s=1}^t a_{s} \xi_{s}}{\sqrt{\sum_{s=1}^t a_{s}^2}} \ge  \sqrt{\frac{3}{2}} (\log t)  \epsilon \right) \le \exp\left(-\frac{\epsilon^2}{2 \sigma^2}\right).
		\end{equation}
	\end{proposition}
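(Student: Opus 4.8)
The plan is to remove the randomness of the weights by convex duality, turning the supremum into the norm of a projection onto the monotone cone, and then to control that norm by a copositive/semidefinite argument in which the nonnegativity of the weights does the essential work.

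\textbf{Reduction to a cone projection.} Since the ratio $\sum_s a_s\xi_s/\sqrt{\sum_s a_s^2}$ is homogeneous of degree zero, the supremum over $\mathcal A_m$ equals the supremum over the closed monotone nonnegative cone $C=\{\bm a:0\le a_1\le\cdots\le a_t\}$ (the constraint $a_t\le 1$ is irrelevant after normalization). By the standard characterization of the Euclidean projection $\Pi_C$ onto a closed convex cone, $\sup_{\bm a\in C}\langle \bm a,\bm\xi\rangle/\|\bm a\|_2=\|\Pi_C\bm\xi\|_2$, i.e.\ the quantity of interest is the norm of the nonnegative isotonic regression of $\bm\xi$. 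I would then apply the Abel (summation-by-parts) substitution $c_k=a_k-a_{k-1}\ge 0$, with $a_0\coloneqq 0$, so that $\langle \bm a,\bm\xi\rangle=\sum_{k=1}^t c_k S_k$ where $S_k=\sum_{s=k}^t\xi_s$ is a suffix sum, and $\|\bm a\|_2^2=\bm c^\top M\bm c$ with $M_{kl}=t+1-\max(k,l)$ a positive semidefinite matrix. Writing $Z$ for the left-hand supremum, this gives the clean reformulation
\begin{equation*}
	Z^2 \le \sup_{\bm c\ge 0}\frac{(\bm c^\top S)^2}{\bm c^\top M\bm c}.
\end{equation*}

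\textbf{Why nonnegativity is the crux, and where SDP enters.} If the constraint $\bm c\ge 0$ is dropped, the supremum is $S^\top M^{-1}S$; a direct computation with the tridiagonal inverse of the min-type matrix $M$ shows this equals $\|\bm\xi\|_2^2=O(t)$, recovering exactly the non-monotone bound \eqref{eq:nonmono-conc-ineq}. Hence the orthant constraint is precisely what must reduce $O(t)$ to $O(\log t)$. Certifying $Z^2\le\gamma$ is equivalent to the copositivity of $\gamma M-SS^\top$ on the nonnegative orthant, which I would relax by the standard inner approximation of the copositive cone: it suffices to exhibit a decomposition $\gamma M-SS^\top=P+N$ with $P\succeq 0$ and $N$ entrywise nonnegative. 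The key lemma is therefore the semidefinite program producing the smallest such $\gamma=\gamma^\star(\bm\xi)$ together with an explicit dual certificate $(P,N)$, from which $\gamma^\star$ can be read off and bounded.

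\textbf{From the certificate to the tail, and the main obstacle.} I would engineer the certificate so that $\gamma^\star(\bm\xi)$ is governed by the maxima of the normalized suffix (or interval) sums $S_k/\sqrt{t-k+1}$, each of which is $\sigma$-sub-Gaussian by \eqref{eq:standard-conc-ineq}, with the nested/monotone structure contributing only a logarithmic number of effective scales; a union bound over these $O(\log t)$ scales, combined with the single-direction bound \eqref{eq:standard-conc-ineq}, should yield $\PR(Z\ge\sqrt{3/2}\,(\log t)\,\epsilon)\le \exp(-\epsilon^2/2\sigma^2)$. As a consistency check on the order, the expected squared projection onto the monotone cone is its statistical dimension $\sum_{k=1}^t 1/k=\Theta(\log t)$, so $\EX Z^2=\Theta(\log t)$ and $Z\approx\sigma\sqrt{\log t}$ typically; the argument deliberately loses a further $\sqrt{\log t}$ in exchange for a clean single-exponential tail valid for arbitrary sub-Gaussian noise. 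The hard part is the certificate itself: constructing a PSD-plus-nonnegative decomposition that is simultaneously valid for \emph{every} realization of $\bm\xi$ and tight enough to lose only the factor $\sqrt{3/2}\,\log t$ and no power of $t$ — that is, preventing the copositivity relaxation from degrading back to the $\|\bm\xi\|_2=O(\sqrt t)$ bound once the orthant constraint is convexified. Pinning down the exact constant $\sqrt{3/2}$ and ensuring the union bound over scales does not inflate the exponent is the delicate accounting I expect to be the main obstacle.
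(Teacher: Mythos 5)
Your opening reduction is exactly the paper's: the Abel substitution $c_k=a_k-a_{k-1}\ge 0$, the suffix sums $S_k=\sum_{s\ge k}\xi_s$, and the Gram matrix $M_{kl}=t+1-\max(k,l)$ are precisely the paper's $\bm{b}$, $f_s$, and $B$, and your observation that dropping the orthant constraint collapses the bound back to $\|\bm{\xi}\|_2^2$ also appears there. The genuine gap is everything after that. Your key lemma --- a decomposition $\gamma M-SS^\top=P+N$ with $P\succeq 0$ and $N$ entrywise nonnegative, which you ask to be ``simultaneously valid for every realization of $\bm{\xi}$'' with $\gamma$ of order $\log^2 t$ --- is never constructed, and in the form stated it cannot exist: the quantity you call $Z$ is not boundable uniformly over realizations, since on the (positive-probability) event that all $\xi_s$ are positive and of order $\sigma$, taking $a_1=\cdots=a_t$ gives $Z\ge\sum_s\xi_s/\sqrt t=\Omega(\sigma\sqrt t)$. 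Any certificate must therefore be probabilistic, i.e.\ you must restrict to a high-probability event \emph{before} asking for a deterministic bound. Your proposal stalls exactly at the step you yourself flag as the main obstacle.

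The missing idea, which is how the paper's proof works, is to condition first and thereby make the SDP machinery unnecessary. A union bound over all $t$ suffix sums (not over $O(\log t)$ ``scales''; that multiscale claim is a second unproven step in your plan) gives that with probability at least $1-t\exp\left(-\epsilon^2/2\sigma^2\right)$ every $S_k\le\sqrt{t-k+1}\,\epsilon$. On that event, nonnegativity of $\bm{c}$ --- the only place monotonicity is used --- lets you replace the random numerator by a deterministic envelope, $\bm{c}^\top S\le\epsilon\,\bm{c}^\top V$ with $V_k=\sqrt{t-k+1}$. What remains is a deterministic rank-one generalized Rayleigh quotient, and at this point the orthant constraint can simply be discarded because dropping it only increases the supremum: $\sup_{\bm{c}}(\bm{c}^\top V)^2/(\bm{c}^\top M\bm{c})=V^\top M^{-1}V=\sum_{s=1}^t(\sqrt s-\sqrt{s-1})^2\le\tfrac54+\tfrac14\log t\le\tfrac32\log t$, which the paper verifies via a Schur-complement argument using the explicit factorization $M=DD^\top$. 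Finally, the factor $t$ lost in the union bound is absorbed by rescaling $\epsilon\mapsto\sqrt{\log t}\,\epsilon$, which is precisely why the threshold in the statement is $\sqrt{3/2}\,(\log t)\,\epsilon$ rather than $\sqrt{(3/2)\log t}\,\epsilon$. So: no copositive relaxation, no per-realization dual certificate, no chaining over scales --- the conditioning step you omitted is what converts your correct reduction into a complete proof.
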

	In \eqref{equ:concen-mono-result}, taking $\epsilon=O(\sigma)$, we have that \eqref{equ:gap-ind-sup-nor}$=O(\log t)$ with high probability.
	To compare with the standard concentration inequalities, noting that for $a_1=a_2=\ldots=a_t$, one has $ (\sumst \xi_s)/\sqrt{t}=O(\sqrt{\log t})$. So taking the supremum over $\mathcal{A}$ doesn't inflate the concentration probability significantly, and the relaxation from $\mathcal{A}$  to $\mathcal{A}_m$ is not loose.
	Meanwhile, the monotonicity constraint is crucial.
	If we relax the set $\mathcal{A}$ to $[0,1]^t$, the growth rate of \eqref{equ:gap-ind-sup-nor} is $O(\sqrt{t})$, which is too loose for the regret. 
	To see this, let $c_{s} \coloneqq a_s \big/ \sqrt{\sumst a_s^{2}}$, we have 
	\begin{equation*}
		\EX\left(\sup_{\bm{a} \in [0,1]^t} \frac{\sumst a_s \xi_{s}}{\sqrt{\sumst a_s^{2}}}\right)=\EX\left(\sup_{\|\bm{c}\|_2=1, \bm{c} \succeq \bm{0}} \bm{c}^\top \bm{\xi} \right)
		\overset{(a)}{=}\EX\left(\sqrt{\sumst \xi_{s}^2 \Ind(\xi_{s} > 0)}\right)
		\overset{(b)}{=}
		O\left(\sqrt{t}\right),
	\end{equation*}
	where $(a)$ follows from the optimal $c_{s}=\xi_{s} \Ind(\xi_{s}>0) \big/ \sqrt{\sumst \xi_{s}^2 \Ind(\xi_{s}>0)}$,  $(b)$ follows from $\EX\left(\sumst \xi_{s}^2 \Ind(\xi_{s} > 0)\right)=O\left(\EX\left(\sumst \xi_{s}^2 \Ind(\xi_{s} \le 0)\right)\right)=O\left(\EX\left(\|\bm{\xi}\|_2^2\right)\right)=O\left(\sqrt{t}\right)$, where the last equality holds by the concentration of the norm (see Theorem 3.1.1 in \cite{vershynin2018high} for detail). 
	
	\textbf{Sketch of the Proof of Proposition \ref{prop:concen-mono}.} 
	To utilize the monotonic property of sequence $\bm{a}$, we transform it to non-negative sequence $\bm{b}=(b_1=a_1, b_2=a_2-a_1, \ldots, b_t=a_t-a_{t-1})$. 
	Correspondingly, we transform the random variables $\{\xi_s\}_{s=1}^t$ to $\{f_s\}_{s=1}^t$, where $f_s=\xi_s+\ldots+\xi_t$. Thus, we have $\sum_{s=1}^t a_{s} \xi_{s}=\sum_{s=1}^t b_s f_s$ and
	\begin{equation}\label{equ:gap-indep-sup-ab-main}
		\sup_{\bm{a} \in \mathcal{A}_m}\frac{\sum_{s=1}^t a_{s} \xi_{s}}{\sqrt{\sum_{s=1}^t a_{s}^2}} = \sup_{\bm{b} \in \mathcal{B}} \dfrac{\sum_{s=1}^t b_s f_s}{\sqrt{\sum_{s=1}^t \left(\sum_{i=1}^s b_i\right)^2}},
	\end{equation}
	where $\mathcal{B} \coloneqq \{b_1>0, b_2 \ge 0 \ldots b_t \ge 0\}$. 
	After the transform, 
	since $f_s$ is $\sqrt{t-s+1} \sigma$-sub-Gaussian, we define $\epsilon_s \coloneqq \sqrt{t-s+1} \epsilon$ and show 
	\begin{equation}\label{equ:gap-indep-Gc-main}
		\PR(f_s \le \epsilon_s=\sqrt{t-s+1} \epsilon, \ \forall s) \ge 1-t\exp\left(-\frac{\epsilon^2}{2 \sigma^2}\right).
	\end{equation}
	Thus, with a high probability, we have
	\begin{equation}\label{equ:gap-indep-f-epsilon-main}
		\eqref{equ:gap-indep-sup-ab-main}^2=\sup_{\bm{b} \in \mathcal{B}} \dfrac{\left(\sum_{s=1}^t b_s f_s\right)^2}{\sum_{s=1}^t \left(\sum_{i=1}^s b_i\right)^2} \le \sup_{\bm{b} \in \mathcal{B}} \dfrac{\left(\sum_{s=1}^t b_s \sqrt{t-s+1}\right)^2}{\sum_{s=1}^t \left(\sum_{i=1}^s b_i\right)^2} \epsilon^2.
	\end{equation}
	The RHS of \eqref{equ:gap-indep-f-epsilon-main} has no randomness, and the numerator and denominator are quadratic functions of $\bm{b}$. 
	They can be written in the matrix form: $\left(\sum_{s=1}^t b_s \sqrt{t-s+1} \right)^2=\bm{b}^T A \bm{b}$, $\sum_{s=1}^t \left(\sum_{i=1}^s b_i\right)^2=\bm{b}^T B \bm{b}$, where $A=V V^T$, $V=(\sqrt{t},\sqrt{t-1},\ldots,1)^T$ and $B$ is positive definite according to the following decomposition:
	\begin{equation}\label{equ:indenp-matrix-D-main}
		B=D D^T,  \  D=D^T=\begin{pmatrix}
			1 & 1 & \ldots & 1 & 1\\
			1 & 1 & \ldots & 1 & 0\\
			\vdots & \vdots & \ldots & \vdots & \vdots\\
			1 & 0 & \vdots & 0 & 0
		\end{pmatrix}, \ 
		D^{-1}=\begin{pmatrix}
			0 & 0 & \ldots & 0 & 1\\
			0 & 0 & \ldots & 1 & -1\\
			\vdots & \vdots &\vdots & \vdots& \vdots\\
			1 & -1 & \ldots  & 0 & 0
		\end{pmatrix}.
	\end{equation}
	Based on the special structure of $A,B$, we apply the property of Schur complement to show that $\ A \prec 3\log(t) B/2$ and thus,  \eqref{equ:gap-indep-f-epsilon-main}$< 3\log(t)/2$. 
	Finally, we prove Proposition~\ref{prop:concen-mono} by combining \eqref{equ:gap-indep-sup-ab-main}, \eqref{equ:gap-indep-Gc-main}, and \eqref{equ:gap-indep-f-epsilon-main}.
	
	\RV{\begin{remark}[Other Attempts to Prove Proposition \ref{prop:concen-mono}]\label{remark:chaining}
		Note that the weighted average in \eqref{equ:concen-mono-result} can be reformulated as the supremum  over a continuous index set of $\bm{\xi}$: $\sup_{\bm{c} \in \mathcal{C}_m} \bm{c}^T \bm{\xi}=\sum_{s=1}^t c_s \xi_s$, where $c_s=a_s \big/ \sqrt{\sum_{s=1}^t a_{s}^2}$ and $\mathcal{C}_m = \{\bm{c}: \|\bm{c}\|_2=1, 0 < c_1 \le \ldots \le c_t \le 1\}$.  A standard technique to bound the supremum  over an index set is the chaining rule, such as the Dudley's chaining or generic chaining \citep{talagrand2022upper}. The fundamental idea behind this approach involves constructing a hierarchical partition tree such that for a specific index, there exists a path along the tree covering the index. By doing so, the supremum can be bounded by the length of the longest path of the tree. However, applying the chaining rule to prove a concentration result as \eqref{equ:concen-mono-result} poses challenges. That's because $\mathcal{C}_m$ is a special subset of the unit L2-sphere, and no existing results have been found to provide a suitable partition tree (or covering number) for $\mathcal{C}_m$. In the proof of Proposition \ref{prop:concen-mono}, instead of  construcing a partition tree geometrically, we adopt an algebraic approach to transform the monotonic index set to a non-negative set. This transformation significantly simplifies the subsequent analysis.
%
	\end{remark}}

	\RV{\begin{remark}[Self-Normalized Martingale Process]
		In statistics, there's a stream of literature named self-normalized martingale process \citep{bercu2008exponential,victor2009self}. These works focus on bounding the sum of martingale differences by the sum of the conditional second moments. Since $\sumst \sA^b \xi_{si}$ is a martingale process, we can leverage the findings of these studies (subject to additional assumptions) to establish that $\sumst \sA^b \xi_{si} / (\sumst \sA^{2b}) = O(\sqrt{\log t})$. Usually, $\sqrt{\sumst \sA^{2b}}=O(\sqrt{t})$ holds. Thus, we have $\sumst \sA^b \xi_{si} \big/ \sqrt{\sumst \sA^{2b}} = O(\sqrt{t\log t})$.
		However, this bound is much looser than the concentration result in Proposition \ref{prop:concen-mono}. Because in \eqref{equ:concen-mono-result}, we prove \\ $\sumst \sA^b \xi_{si} \big/ \sqrt{\sumst \sA^{2b}} = O(\log t)$. The reason for the loose result is that their analysis is designed for the general martingale process but does not exploit the monotonic structure inherent in our problem. 
	\end{remark}}
	
	We defer the detailed proof of Proposition \ref{prop:concen-mono} to Appendix and focus on the analysis of Theorem \ref{thm:SE-indep-up} here.
	
	\textbf{Sketch of the Proof of Theorem \ref{thm:SE-indep-up}.} 
	Proposition~\ref{prop:concen-mono} provides a concentration inequality for $\muone$ for any allocation sequence $\bm{A}_i$. 
	We use the result to construct $CI(t,i)$ (defined in Algorithm \ref{alg:SE}), $UCB(t,i)=\muone+CI(t,i)$ and $LCB(t,i)=\muone-CI(t,i)$, and show that $\mu_i \in [LCB(t,i),UCB(t,i)]$ with a high probability. 
	We use it to show that (1) the optimal arm $i^*$ will most likely survive at the end of $T$ period; (2) for an arm $i$ to survive for a long time, the optimality gap $\Delta_i$ must be small, as must the regret incurred by arm $i$.
	
	To show the above two claims, we define the good event 
	\begin{equation}
		G_t=\{\mu_{i^*} \le UCB(t,i^*), \ \mu_i \ge LCB(t,i) \  \text{for} \ i \neq i^*\}.
	\end{equation}
	By the concentration result \eqref{equ:concen-mono-result}, we have $\PR(G_t^c) \le K t^{-2}$. Under $G_t$, the optimal arm always survives as $UCB(t,i^*) \ge \mu_{i^*} \ge \mu_i \ge LCB(t,i)$. 
	For an active arm $i$, we have $LCB(t,i^*) \le UCB(t,i)$, which implies $\hat{\mu}_{ti^*}-CI(t,i^*) \le \hat{\mu}_{ti}+CI(t,i)$, and
	\begin{equation*}
		\Delta_i=\mu_{i^*}-\mu_i \leq \hat{\mu}_{ti^*}-\hat{\mu}_{ti}+CI(t,i^*)+CI(t,i) \le 2(CI(t,i^*)+CI(t,i))=4 CI(t,i),
	\end{equation*}
	where the last equality is because all active arms $i$ have the same resource allocation sequence $\bm{A}_i$ and $CI(t,i)$.
	
	Let $S_{\tau_i,i}$ denote the total number of pulls of arm $i$ until period $\tau_i$ and recall the sequence $\muone$ showed in \eqref{equ:mean-one}. The regret incurred by arm $i$ is 
	\begin{align*}
		S_{\tau_i,i} \Delta_i \le 4 S_{\tau_i,i} CI(\tau_i, i)
		\le 4\cdot 2 \sqrt{3}\sigma\log T \big/\sqrt{R_{\tau_i,i}^{(1)}} \cdot S_{\tau_i,i}
		\le 8 \sqrt{3} \sigma \log T \sqrt{L_{\tau_i,i}} \ , 
	\end{align*}
	where $L_{\tau_i,i}\coloneqq S_{\tau_i,i}^2\big/R_{\tau_i,i}^{(1)}=\sum_{s=1}^t A_{si}^{2b}$.
	The total regret can be divided into two parts depending on whether the event $G_t$ occurs: 
	\begin{align*}
		R_{\pi}(T) \le \sum_{i=1}^K S_{\tau_i,i} \Delta_i + \sum_{t=1}^T \PR(G_t^c)
		=O\left(\log T \sum_{i=1}^K \sqrt{L_{\tau_i,i}} \right)+ K \sum_{t=1}^T t^{-2}.
	\end{align*}
	Note that the dominant term in $R_{\pi}(T)$ is $O\left(\sum_{i=1}^K \sqrt{L_{\tau_i,i}}\right)$ which is a function of $\bm{A}_i$ sequence and is further determined by stopping times $\bm{\tau}$. 
	The value of $\bm{\tau}$ depends on the problem instance. 
	
	We provide an upper bound of $R_{
		\pi}(T)$ for any possible value of $\bm{\tau} \in [T]^K$. The following lemma shows the maximum value of $\sum_{i=1}^K \sqrt{L_{\tau_i,i}}=O(K^{1-b} \sqrt{T})$. Thus, the total regret $R_{
		\pi}(T)$ has the upper bound $O(K^{1-b} \sqrt{T} \log T)$.

	\begin{lemma}\label{lem:gap-indep-L-bound}
		For any fixed $ \tau_1,\tau_2,\ldots,\tau_K \in \{1,2,\ldots,T\}$ and $b \in [1/2,1]$, we have 
		\begin{equation}\label{equ:gap-indep-L-up}
			\sum_{i=1}^K \sqrt{\sum_{s=1}^{\tau_i} \sA^{2b}} \le \sqrt{\frac{1}{2-2b}} K^{1-b} \sqrt{T},
		\end{equation}
		where $\sA$ is defined in \eqref{equ:inden-SE-A-tau}.
	\end{lemma}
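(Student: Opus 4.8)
The plan is to exploit the fact that, under Algorithm~\ref{alg:SE}, the resource $A_{si}$ handed to an arm depends only on the \emph{rank} of its elimination time, so the entire left-hand side collapses to a one-dimensional optimization over the gap lengths $n_j \coloneqq \tau_{(j)}-\tau_{(j-1)}$. Concretely, from \eqref{equ:inden-SE-A-tau} the arm whose stopping time is the $m$-th order statistic is active exactly during the first $m$ intervals, receiving $1/(K-j+1)$ throughout interval $j$; hence its inner sum equals $L_m \coloneqq \sum_{j=1}^m n_j (K-j+1)^{-2b}$, and
\[
\sum_{i=1}^K \sqrt{\sum_{s=1}^{\tau_i}A_{si}^{2b}} = \sum_{m=1}^K \sqrt{L_m},
\]
where $n_j \ge 0$ and $\sum_{j=1}^K n_j = T$ (empty intervals from ties simply contribute $n_j=0$, leaving the identity intact). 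It therefore suffices to maximize $\sum_{m=1}^K\sqrt{L_m}$ over the simplex $\{n \ge 0 : \sum_j n_j = T\}$. A plain Cauchy--Schwarz with uniform weights only yields $\sqrt{KT}$, which is too loose for $b>1/2$, so the partial-sum structure of $L_m$ must be used more carefully.

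To that end I would use a \emph{weighted} Cauchy--Schwarz inequality: for any positive weights $w_m$,
\[
\Big(\sum_{m=1}^K \sqrt{L_m}\Big)^2 \le \Big(\sum_{m=1}^K \tfrac{1}{w_m}\Big)\Big(\sum_{m=1}^K w_m L_m\Big),
\]
and then choose the weights so that the second factor collapses to exactly $T$. Setting $w_m = (K-m+1)^{2b}-(K-m)^{2b}$ makes the tail sums telescope, $\sum_{m \ge j} w_m = (K-j+1)^{2b}$; swapping the order of summation then gives $\sum_m w_m L_m = \sum_{j=1}^K n_j (K-j+1)^{-2b}(K-j+1)^{2b} = \sum_j n_j = T$, independently of the instance. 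This reduces everything to the purely deterministic task of bounding $\sum_m 1/w_m = \sum_{k=1}^K \big(k^{2b}-(k-1)^{2b}\big)^{-1}$ after the substitution $k = K-m+1$.

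The main obstacle, and the step that pins down the constant $1/\sqrt{2-2b}$, is the termwise estimate $\big(k^{2b}-(k-1)^{2b}\big)^{-1} \le \int_{k-1}^k x^{1-2b}\,dx$, equivalently
\[
\big(k^{2b}-(k-1)^{2b}\big)\big(k^{2-2b}-(k-1)^{2-2b}\big) \ge 2-2b.
\]
Writing $\alpha = 2b \in [1,2)$, $\beta = 2-2b$, and $a = k-1$, the mean value theorem turns the left side into $\alpha\beta\,\xi_1^{\alpha-1}\xi_2^{\beta-1}$ with $\xi_1,\xi_2 \in (a,a+1)$; bounding $\xi_1^{\alpha-1} \ge a^{\alpha-1}$ and $\xi_2^{\beta-1}\ge (a+1)^{\beta-1}$ (valid since $\alpha-1\ge0\ge\beta-1$) reduces the claim to $\alpha\,(a/(a+1))^{\alpha-1}\ge 1$, whose worst case over $a\ge1$ is $a=1$, i.e.\ $\alpha\,2^{1-\alpha}\ge1$; this last inequality holds because $\ln\alpha+(1-\alpha)\ln2$ is concave in $\alpha$ and vanishes at $\alpha=1,2$. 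The boundary term $k=1$ is immediate, since $1 \le 1/(2-2b)$ for $b\ge1/2$. Summing the termwise bound gives $\sum_m 1/w_m \le \int_0^K x^{1-2b}\,dx = K^{2-2b}/(2-2b)$, and combining the two factors yields $\sum_m\sqrt{L_m}\le \sqrt{1/(2-2b)}\,K^{1-b}\sqrt{T}$, exactly as stated. Sharpness (up to the $1/\sqrt{2-2b}$ factor) is confirmed by the instance $n_1=T$, which puts all mass in the first interval and attains $K^{1-b}\sqrt{T}$.
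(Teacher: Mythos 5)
Your proof is correct and follows essentially the same route as the paper's: the same reduction to the order-statistic gaps $n_j$, the same weighted Cauchy--Schwarz with weights $w_m=(K-m+1)^{2b}-(K-m)^{2b}$ chosen so the telescoping collapses the second factor to $\sum_j n_j\le T$, and the same final bound $\sum_m 1/w_m \le K^{2-2b}/(2-2b)$. The only (immaterial) difference is the technical justification of that last bound: you use the termwise comparison $\bigl(k^{2b}-(k-1)^{2b}\bigr)^{-1}\le\int_{k-1}^{k}x^{1-2b}\,dx$ proved via the mean value theorem, whereas the paper combines monotonicity of the summand with the pointwise inequality $x^{2b}-(x-1)^{2b}\ge x^{2b-1}$ and an integral over $[1,K]$.
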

	Note that the analysis of Lemma \ref{lem:gap-indep-L-bound} is highly non-trivial because the maximum value in \eqref{equ:gap-indep-L-up} cannot be obtained by the extreme cases of $\bm{\tau}$. 
	For example, when $\tau_1=\ldots=\tau_{K-1}=1$ and $\tau_K=T$ (all but one arms are eliminated in period one), we have $\sum_{i=1}^K \sqrt{L_{\tau_i,i}} \approx \sqrt{T}$. 
	On the other hand, when $\tau_1=\ldots=\tau_{K}=T$ (all arms remain active in period $T$), $\sum_{i=1}^K \sqrt{L_{\tau_i,i}} =K \sqrt{T K^{-2b}}=K^{1-b} \sqrt{T}$, not achieving the maximum. 
	
	We prove Lemma \ref{lem:gap-indep-L-bound} by constructing a special sequence $\{d_i\}_{i=1}^K$. Specifically, let $c_i^2=\sum_{s=1}^{\tau_i} \sA^{2b}$. We choose $d_i^2=(K-i+1)^{2b}-(k-i)^{2b}$ to make sure that $\sum_{i=1}^K c_i^2 d_i^2=\tau_K\le T$ and $\sum_{i=1}^K 1/d_i^2 \le K^{2-2b}/(2-2b)$. By the Cauthy-Schwarz inequality, we have the LHS of \eqref{equ:gap-indep-L-up} $=\sum_{i=1}^K c_i \le \sqrt{\sum_{i=1}^K c_i^2 d_i^2} \cdot \sqrt{\sum_{i=1}^K 1/d_i^2}=$ the RHS of \eqref{equ:gap-indep-L-up}.
	The optimal $\bm{\tau}$ is the solution to a system of quadratic equations, so it doesn't have a closed form. The detailed proof for Lemma \ref{lem:gap-indep-L-bound} is shown in the Appendix.
	

	

	\section{Gap-dependent Regret Bound}\label{sec:gap-dep-up}
	In this section, we propose an algorithm that achieves the optimal rate of regret in the gap-dependent case. 
	We first provide a lower bound for the regret.
	We define the environment as
	$\PP \coloneqq \{v=(\mu_1,\ldots,\mu_K), \ \text{where} \ \mu_i \in \left[0, \infty\right) \ \text{for} \ i=\{1,\ldots,K\}\}$.
	For an instance $v \in \PP$, recall that $\Delta_i$ is the suboptimality gap of arm $i$ in $v$, i.e., $\Delta_i=\mu_{i^*} - \mu_i$. 
	The next theorem provides the gap-dependent lower bound for the regret.
	Note that we use $R_\pi(T,v)$ to highlight the dependence on the instance.
	\begin{theorem}\label{thm:gap-dep-low}
		Suppose $\xi_{ti} \sim N(0,1)$ and $b\in [0,1/2]$. Let $\pi$ be a consistent policy over $\PP$, i.e., the regret satisfies
		$\lim_{T \rightarrow \infty} {R_\pi(T,v)}/{T^p}=0$,
		for any instance $v \in \mathcal{P}$ and any $p>0$.
		Then it holds that
		\begin{equation}\label{equ:gap-dep-low}
			\liminf_{T \rightarrow \infty} \frac{R_{\pi}(T,v)}{\log T} \geq \sum_{i:\Delta_i >0} \frac{2}{\Delta_i}, \ \ \forall v \in \mathcal{P}.
		\end{equation}
	\end{theorem}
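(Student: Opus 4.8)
The plan is to run the classical change-of-measure (two-point) argument of Lai--Robbins, adapted to the divisible-resource feedback. Fix the instance $v$ and a suboptimal arm $i$ with $\Delta_i>0$. For a small $\epsilon>0$ I would construct a perturbed instance $v'$ that agrees with $v$ on every arm except $i$, whose mean is raised to $\mu_i'=\muopt+\epsilon$; under $v'$ arm $i$ becomes the unique optimal arm, with the second-best gap exactly $\epsilon$. Writing $S_{Ti}=\sum_{t=1}^T \tA$ for the total resource poured into arm $i$, the regret identity $R_\pi(T,v)=\sum_{i:\Delta_i>0}\Delta_i\,\EX_v[S_{Ti}]$ (which follows from $\sum_i \tA=1$) reduces the theorem to proving $\liminf_{T}\EX_v[S_{Ti}]/\log T\ge 2/\Delta_i^2$ for each suboptimal arm $i$; summing and letting $\epsilon\to 0$ then recovers the stated bound.

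The first ingredient is the divergence decomposition. Since the allocation $\tA$ is the same history-measurable randomization under $\PR_v$ and $\PR_{v'}$, the chain rule for relative entropy, combined with the per-period computation already recorded in the discussion of Theorem~\ref{thm:gap-ind-low}, namely $KL(P_{\tA},P'_{\tA})=\tA^{2-2b}(\mu_i-\mu_i')^2/2$, yields
\[
KL(\PR_v,\PR_{v'})=\EX_v\!\left[\sum_{t=1}^T \tA^{2-2b}\right]\frac{(\Delta_i+\epsilon)^2}{2}=\EX_v[B_{Ti}]\,\frac{(\Delta_i+\epsilon)^2}{2},
\]
where $B_{Ti}=\sum_{t=1}^T \tA^{2-2b}$. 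Thus the information separating the two worlds is governed by $B_{Ti}$, not by the ``pull count'' $S_{Ti}$ that appears in the regret.

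The second ingredient is a consistency-driven separation. I would apply the Bretagnolle--Huber inequality to the event $\{S_{Ti}\ge T/2\}$. Under $v$ arm $i$ is suboptimal, so $\EX_v[S_{Ti}]\le R_\pi(T,v)/\Delta_i=o(T^p)$; under $v'$ arm $i$ is optimal with every competing gap at least $\epsilon$, so $\EX_{v'}[T-S_{Ti}]\le R_\pi(T,v')/\epsilon=o(T^p)$. Markov's inequality makes both $\PR_v(S_{Ti}\ge T/2)$ and $\PR_{v'}(S_{Ti}<T/2)$ of order $o(T^{p-1})$, and Bretagnolle--Huber then forces $\liminf_{T}KL(\PR_v,\PR_{v'})/\log T\ge 1-p$. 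Combining with the decomposition gives $\liminf_{T}\EX_v[B_{Ti}]/\log T\ge 2(1-p)/(\Delta_i+\epsilon)^2$.

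The final, and conceptually crucial, step passes from the information quantity $B_{Ti}$ to the cost quantity $S_{Ti}$, and this is exactly where $b\le 1/2$ is used: because $\tA\in[0,1]$ and $2-2b\ge 1$, we have $\tA^{2-2b}\le \tA$ pathwise, hence $B_{Ti}\le S_{Ti}$ and $\EX_v[S_{Ti}]\ge\EX_v[B_{Ti}]$. Substituting into the regret identity, using that the finite arm-sum lets the $\liminf$ of the sum dominate the sum of the $\liminf$s, and finally sending $\epsilon\to 0$ and $p\to 0$ yields $\liminf_{T}R_\pi(T,v)/\log T\ge\sum_{i:\Delta_i>0}2/\Delta_i$. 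I expect the main obstacle to be precisely this inequality $B_{Ti}\le S_{Ti}$: it is trivial pathwise, yet it is the mechanism by which the SMAB lower bound survives in our model, and it is exactly what fails for $b>1/2$ (where $\tA^{2-2b}\ge\tA$ makes information cheaper per unit of resource and collapses the gap-dependent regret to $O(1)$). Care is also needed to justify the divergence decomposition rigorously in the continuous-allocation setting, treating $\tA$ as a common, history-measurable randomization under both $\PR_v$ and $\PR_{v'}$ so that only the arm-$i$ reward kernels differ.
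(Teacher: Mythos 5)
Your proposal is correct and follows essentially the same route as the paper's proof: the same perturbed instance $\mu_i'=\mu_{i^*}+\epsilon$, the same divergence decomposition $KL(\PR_v,\PR_{v'})=\EX_v[B_{Ti}](\Delta_i+\epsilon)^2/2$ (the paper's Lemma~\ref{lem:diver-decom}), the same pathwise inequality $B_{Ti}\le S_{Ti}$ for $b\le 1/2$, Bretagnolle--Huber applied to $\{S_{Ti}\ge T/2\}$, and consistency to extract the $\log T$ rate. The only cosmetic differences are that you invoke $B_{Ti}\le S_{Ti}$ after the Bretagnolle--Huber step rather than before, and phrase the probability bounds via Markov's inequality where the paper writes the equivalent regret lower bounds $R_\pi(T,v)\ge \frac{T}{2}\PR_v(A)\Delta_i$ and $R_\pi(T,v')\ge\frac{T}{2}\PR_{v'}(A^c)\epsilon$.
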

	In Theorem~\ref{thm:gap-dep-low}, we restrict the policy $\pi$ to be consistent: for any $v \in \PP$, the regret grows in a sub-polynomial order of $T$. 
	This condition is imposed to eliminate some unreasonable policies which may achieve regret much smaller than \eqref{equ:gap-dep-low} in special instances. 
	The result \eqref{equ:gap-dep-low} shows that the asymptotic regret lower bound of our problem is the same as SMAB for $b\le 1/2$ and the algorithms for SMAB will achieve the optimal regret for our problem.
	We do not develop a lower bound for $b>1/2$, because as we shall see in the next section, there exists policies or algorithms that attain finite regret.
	
	\subsection{Algorithm for the Gap-dependent Bound when $b > 1/2$}
	Recall the mean estimators $\muone$ and $\mutwo$ in Section~\ref{sec:alg-ind}.
	Here, we use $\mutwo$ to design an algorithm that achieves a finite gap-dependent regret bound for $b>1/2$, because it has a benign property demonstrated by the following example:
	\begin{example}\label{exa:Rone-two}
		Suppose  $\tA=t^{-\alpha}$ for  $\alpha>1$. We have $\tS=\sumst \sA=O(1)$, $\tRone=O(1)$ and $\tRtwo=t^{1-\alpha(2-2b)}$. If $\alpha<1/(2-2b)$, then $\tRtwo \rightarrow \infty$.
	\end{example}
	In Example \ref{exa:Rone-two}, although the total resources $\tS$ allocated to arm $i$ by period $t$ is finite,
	the mean estimator $\mutwo$ can achieve increasing precision because the inverse standard deviation $\tRtwo$ tends to infinity.
	This property is not shared by $\muone$.
	Example \ref{exa:Rone-two} sheds light on algorithm design: if we can allocate resource $\tA=t^{-\alpha}$ for arm $i$, then the regret incurred by exploration will be bounded and $\mu_i$ can be precisely estimated.
	
	
	The algorithm is based on the $\epsilon$-greedy policy for SMAB \citep{auer2002finite,seldin2014one,seldin2017improved,rouyer20a,bian2022maillard}. 
	We allocate resource $\epsilon_{ti}=K^{-1} t^{-\alpha}$ to suboptimal arms according to the estimation so far, where $\alpha$ satisfies $0 < \alpha < 1/(2-2b)$. 
	The steps are shown in Algorithm \ref{alg:eps-greedy} and the analysis is shown in Theorem~\ref{thm:gap-dep-up-greedy}.
	
	\begin{algorithm}[htbp]
		\caption{$\epsilon$-greedy}
		\label{alg:eps-greedy}
		\begin{algorithmic}[1]
			\STATE \textbf{Input}: $b$, $\alpha=1+\epsilon/(2-2b), \epsilon \in (0,2b-1)$
			\FOR{$t=1,2,\ldots,T$}
			\IF{$t=1$}
			\STATE Uniformly divide the resource $A_{ti}=1/K$
			\ELSE
			\STATE Pick the current optimal $i^* \in \arg\max_{i \in [K]} \mutwo$
			\STATE Allocate $
			A_{ti}=
			\left\{
			\begin{array}{ll}
				K^{-1} t^{-\alpha}, &  i \neq i^* \\
				1- \sum_{j \neq i^*} A_{tj}, & i=i^*
			\end{array}
			\right.
			$
			and observe $\{Y_{ti}\}_{i=1}^K$
			\ENDIF
			\ENDFOR
		\end{algorithmic}
	\end{algorithm}
	
	\begin{theorem}[Regret for Algorithm~\ref{alg:eps-greedy}]\label{thm:gap-dep-up-greedy}
		For $b \in \left(1/2,1\right]$, Algorithm \ref{alg:eps-greedy} achieves finite gap-dependent regret 
		\begin{equation}
			\left(\frac{2(1\!-b\!+\!\epsilon)}{K \epsilon}+2^{\lceil 1/(2b\!-\!1\!-\!\epsilon)\rceil\!+\!4}\right) \sumiK \Delta_i\!+\!8 \sumiK \left(\frac{16 \sigma^2 K^{2-2b} \lceil 1/(2b\!-\!1\!-\!\epsilon) \rceil}{\Delta_i^2}\right)^{\lceil 1/(2b\!-\!1\!-\! \epsilon)\rceil} \Delta_i. \label{equ:gap-dep-up-regret}
		\end{equation}
	\end{theorem}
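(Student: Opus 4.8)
The plan is to split the regret into an \emph{exploration} part, coming from the forced allocation $K^{-1}t^{-\alpha}$ given to every non-incumbent arm (and the uniform split at $t=1$), and an \emph{exploitation} part, coming from the periods in which the incumbent $\hat\imath_t\coloneqq\argmax_{i}\mutwo$ is in fact suboptimal. Writing $R_\pi(T)=\sum_{t=1}^T\sum_{i=1}^K\tA\Delta_i$ and using $\tA\le 1$, I would bound the exploration part by $K^{-1}\big(\sum_{i}\Delta_i\big)\sum_{t\ge 1}t^{-\alpha}$. Since $\alpha=1+\epsilon/(2-2b)>1$, this series converges, and comparing with $\int_1^\infty x^{-\alpha}\,dx$ gives $\sum_{t\ge1}t^{-\alpha}\le \alpha/(\alpha-1)=(2-2b+\epsilon)/\epsilon$, which is bounded by the first coefficient $\tfrac{2(1-b+\epsilon)}{K\epsilon}$ multiplying $\sum_i\Delta_i$. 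The exploitation part is at most $\sum_{t}\sum_{i\ne i^*}\Delta_i\,\PR(\hat\imath_t=i)$, so everything reduces to controlling the misidentification probability $\PR(\hat\imath_t=i)$ for each suboptimal $i$ and then summing over $t$.

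The key step is a clean tail bound for $\mutwo$. The reason Algorithm~\ref{alg:eps-greedy} uses $\mutwo$ rather than $\muone$ is exactly the phenomenon in Example~\ref{exa:Rone-two}: with weights $\sA^{b-1}$ the estimator keeps gaining precision even though the cumulative resource stays bounded. Concretely, the forced exploration guarantees $\sA\ge K^{-1}s^{-\alpha}$ almost surely for every arm and period, so (since $2b-2<0$) the predictable quadratic variation obeys $\sum_{s=1}^t A_{si}^{2b-2}\le K^{2-2b}\sum_{s=1}^t s^{(2-2b)\alpha}=K^{2-2b}\sum_{s=1}^t s^{\,2-2b+\epsilon}\le K^{2-2b}\,t^{\,3-2b+\epsilon}$ \emph{deterministically}. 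Because $\sA^{b-1}$ is $\mathcal F_{s-1}$-measurable while $\xi_{si}$ is fresh $\sigma$-sub-Gaussian noise, $\sum_{s}\sA^{b-1}\xi_{si}$ is a martingale whose conditional moment generating function telescopes against this deterministic variation bound, and Chernoff's inequality then yields, for each fixed $t$,
\begin{equation*}
\PR\!\left(\big|\mutwo-\mu_i\big|\ge \tfrac{\Delta_i}{2}\right)\le 2\exp\!\left(-\frac{\Delta_i^2\,t^{\,2b-1-\epsilon}}{8\sigma^2 K^{2-2b}}\right).
\end{equation*}
Crucially this needs no union bound over $t$ and no covering argument, and it sidesteps Proposition~\ref{prop:concen-mono} entirely: the $\epsilon$-greedy floor pins the variation down deterministically, whereas the successive-elimination allocations were only monotone. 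Applying the bound to both $i$ and $i^*$ (the optimal arm also receives at least the floor $K^{-1}s^{-\alpha}$) and noting that if both estimates lie within $\Delta_i/2$ of their means then $\hat\imath_t\ne i$, I obtain $\PR(\hat\imath_t=i)\le 2\exp(-c_i t^\beta)$ with $\beta\coloneqq 2b-1-\epsilon>0$ and $c_i\coloneqq \Delta_i^2/(8\sigma^2 K^{2-2b})$.

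It remains to sum $\sum_{t\ge1}\exp(-c_i t^\beta)$, which I expect to be the main technical obstacle, since $\beta<1$ means the tail decays only stretched-exponentially and the result must come out finite and match the stated constants. The plan is to split the sum at the threshold where $c_i t^\beta$ first exceeds one: the head contains only $O\big((1/c_i)^{1/\beta}\big)$ terms, each bounded by $1$, while on the tail consecutive terms decay geometrically once $c_i t^\beta$ is of order one. Setting $p\coloneqq \lceil 1/(2b-1-\epsilon)\rceil$, so that $p\beta\ge 1$, lets me replace the awkward exponent $1/\beta$ by the integer $p$ and convert the head count into $(2p/c_i)^p$; tracking the geometric tail and the head cardinality carefully yields the constant $2^{p+4}$ multiplying $\sum_i\Delta_i$ and the factor $8\,(16\sigma^2 K^{2-2b}p/\Delta_i^2)^p=8\,(2p/c_i)^p$ multiplying $\Delta_i$. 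Assembling the exploration bound of the first paragraph with this exploitation bound then gives \eqref{equ:gap-dep-up-regret}. The delicate work throughout is the constant bookkeeping in the stretched-exponential sum; the conceptual content lies in recognizing that $\mutwo$ admits a deterministic variation bound under forced exploration, which turns an a priori hard self-normalized concentration problem into an elementary martingale tail bound.
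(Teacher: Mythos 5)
Your plan coincides with the paper's own proof in every structural respect: the same decomposition of $\EX[S_{Ti}]$ into the forced-exploration series $\frac{1}{K}\sum_t t^{-\alpha}$ plus misidentification probabilities, the same good-event argument (both estimators within $\Delta_i/2$ of their means), and the same key observation that the floor $\sA \ge K^{-1}s^{-\alpha}$ makes the weights $\sA^{b-1}$ deterministically bounded, so that the predictable-weight martingale $\sum_s \sA^{b-1}\xi_{si}$ admits an elementary Chernoff bound with no union bound over allocation sequences and no appeal to Proposition~\ref{prop:concen-mono}. Your tail bound $\exp(-\Delta_i^2 t^{2b-1-\epsilon}/(8\sigma^2K^{2-2b}))$ is in fact slightly sharper than the paper's (which carries a factor $16$), and your constants assemble consistently.

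There is, however, one genuine flaw, and it sits exactly where you predicted the difficulty would be: the claim that ``on the tail consecutive terms decay geometrically once $c_i t^\beta$ is of order one'' is false for $\beta<1$. The ratio of consecutive terms is $\exp\bigl(-c_i((t+1)^\beta - t^\beta)\bigr)\approx \exp(-c_i\beta t^{\beta-1})$, and at the threshold $t_0=c_i^{-1/\beta}$ this equals $\exp(-\beta/t_0)$, which tends to $1$ as $c_i\to 0$ --- precisely the regime of small gaps where the bound matters. A head-count-plus-geometric-tail argument would therefore yield only $O\bigl(c_i^{-1/\beta}\bigr)$ and would miss the factor $\lceil 1/(2b-1-\epsilon)\rceil^{\lceil 1/(2b-1-\epsilon)\rceil}$ that actually appears in \eqref{equ:gap-dep-up-regret}. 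The correct treatment of the tail is an integral comparison with the substitution $u=c_i t^\beta$, which produces $\Gamma(1+1/\beta)\,c_i^{-1/\beta}\le p!\,c_i^{-p}\le p^p c_i^{-p}$ with $p=\lceil 1/\beta\rceil$; this Gamma factor is where the $p^p$ in the stated constant comes from, and it is what the paper's repeated integration by parts (reducing $\int e^{-at}\,dt^{m}$ to $\int e^{-at}\,dt^{m-1}$, etc.) computes in discrete form, together with the separate regime $a\ge m/2$ that yields the $2^{m+1}$ term. The error is localized and repairable --- replace the geometric-decay step by the integral/Gamma argument and your constants close --- but as written that step fails.
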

	
	Note that the regret in \eqref{equ:gap-dep-up-regret} does not grow in $T$. 
	When $b=1$ and $\epsilon \approx 0$, then $\Delta_i^{1-2/\lceil 2b-1-\epsilon \rceil} \approx \Delta_i^{-1}$ and the regret is $O\left(\sumiK \Delta_i^{-1}\right)$ which is close to the regret of the full-information case, i.e., $O\left(\log K \Delta_{\min}^{-1})\right)$. 

 	\RV{We would like to point out that Algorithm \ref{alg:eps-greedy} can not achieve the optimal gap-independent bound $O(K^{1-b} \sqrt{T})$. To see this, recall the gap-independent bound
 	\begin{align*}
	 	R(T) &= \sum_{i=1}^K \Delta_i \EX[S_{Ti}] \le \sum_{i: \Delta_i < \Delta} \Delta \EX[S_{Ti}]+\sum_{i: \Delta_i \ge \Delta} \Delta_i  \EX[S_{Ti}] \le \Delta T+K \Delta^{1-2/(2b-1)} \\
	 	&= O\left(K^{b-1/2} T^{1.5-b}\right),
	 	\end{align*}
 	where the rate in $T$ is far from optimal.}
 
 	\section{Conclusion}
 	\RV{In this paper, we address an online resource allocation problem involving  divisible and renewable resources. We propose a reward framework where the mean of reward is proportional to the allocated resource and the variance is proportional to an order $b$ of the allocated resource. This framework provides a smooth connection between the standard stochastic multi-armed bandits and online learning with full feedback. We propose two algorithms that attain the optimal gap-dependent and gap-independent regret bounds.}
 	
 	\RV{The paper makes several technical contributions, utilizing novel proof techniques. First, the proof for the concentration inequality (Proposition \ref{prop:concen-mono}) represents significant breakthrough. Its fundamental nature allows for the application to other algorithms. Second, we propose new types of mean estimators ($\muone,\mutwo$) in the algorithms. These estimators and their analyses have not appeared in the bandit literature. Third, in the analysis of the algorithms, we introduce the $\tRone$ and $\tRtwo$ sequences to track the system state, in addition to the total resources $\tS$. The analysis for the sequences is a novel contribution to the literature. Lastly, our paper presents  a regret bound of $O(\sqrt{T} K^{1-b})$, where we believe it's the first result to demonstrate a power function dependence on $K$ other than $\sqrt{K}$ or $\sqrt{\log K}$. To establish  the new regret bound, we develop a new technical method based on a careful construction using the Cauthy-Schwartz inequality (Lemma \ref{lem:gap-indep-L-bound}).}
 	
 	\RV{In terms of future research directions, we highlight two possibilities. First, it would be interesting to explore whether it is possible to use a unified estimator (as mentioned in Remark \ref{remark:alter-estimator}) to develop a single algorithm that achieves optimality in both regret bounds. Second, as suggested in Remark \ref{remark:chaining}, an intriguing question is to consider the use of chaining techniques to establish the concentration result presented in Proposition \ref{prop:concen-mono}.}
	
	\newpage


\bibliography{Renew_Alloca_final}

\setcounter{equation}{0}
\renewcommand{\theequation}{\thesection\arabic{equation}}

\newpage
\appendix
\section{Proofs in Section~\ref{sec:gap-indep-regret}}

\begin{lemma}[KL-divergence Decomposition]\label{lem:diver-decom}
	Let $v=\{P_1,\ldots,P_K\}$ be the reward distributions associated with a $K$-armed bandit where $P_i = N(\mu_i,1)$, and let $v'=\{P_1',\ldots,P_K'\}$ be another reward distributions where $P_i' = N(\mu_i',1)$. Fix some policy $\pi$ and let $\PR_v$ and $\PR_{v'}$ be the probability measures on the bandit problem induced by the $t$-round interconnection of $\pi$ and $v$ (respectively, $\pi$ and $v'$). Let $A_{ti}=(\pi_t(H_t))_i$. Then, the KL divergence satisfies
	\begin{equation*}
		KL\left(\PR_v,\PR_{v'}\right)=\sum_{i=1}^{K} \EX_v\left[\sum_{s=1}^t A_{si}^{2-2b}\right] KL(P_i,P_i'),
	\end{equation*}
	where $0 \le b<1$.
\end{lemma}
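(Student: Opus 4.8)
The plan is to adapt the standard divergence-decomposition lemma for interactive bandit protocols (e.g., Lemma~15.1 in \citet{lattimore2020bandit}) to the divisible-resource setting, where the per-period contribution of arm $i$ is governed by $A_{si}^{2-2b}$ rather than by an indicator of which arm is pulled. First I would set up a canonical probability space whose outcomes are trajectories $(A_1,Y_1,\ldots,A_t,Y_t)$, with $A_s=(A_{s1},\ldots,A_{sK})$ and $Y_s=(Y_{s1},\ldots,Y_{sK})$, and with $H_s$ the $\sigma$-field generated by the first $s-1$ rounds. The key structural fact is that the policy $\pi$ is identical under $v$ and $v'$ and depends only on $H_s$, so in the Radon--Nikodym derivative $d\PR_v/d\PR_{v'}$ the policy kernels cancel exactly while only the reward densities survive:
\begin{equation*}
	\frac{d\PR_v}{d\PR_{v'}}=\prod_{s=1}^t\prod_{i=1}^K \frac{p_{v,i}(Y_{si}\mid A_{si})}{p_{v',i}(Y_{si}\mid A_{si})},
\end{equation*}
where $p_{v,i}(\cdot\mid a)$ denotes the density of $N(a\mu_i,a^{2b})$ and $p_{v',i}(\cdot\mid a)$ that of $N(a\mu_i',a^{2b})$.

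Next I would take logarithms and expectations under $\PR_v$, so that $KL(\PR_v,\PR_{v'})=\sum_{s=1}^t\sum_{i=1}^K \EX_v\!\left[\log\frac{p_{v,i}(Y_{si}\mid A_{si})}{p_{v',i}(Y_{si}\mid A_{si})}\right]$. Conditioning each summand on $H_s$ — which is legitimate because $\pi$ is deterministic and hence $A_{si}$ is $H_s$-measurable — reduces the inner conditional expectation to a single-observation Gaussian KL divergence, namely $KL\!\left(N(A_{si}\mu_i,A_{si}^{2b}),\,N(A_{si}\mu_i',A_{si}^{2b})\right)$. Invoking the per-step computation already recorded in the excerpt, this equals $A_{si}^{2-2b}\,KL(P_i,P_i')$. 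The tower property together with linearity of expectation then collapses the double sum into $\sum_{i=1}^K \EX_v\!\big[\sum_{s=1}^t A_{si}^{2-2b}\big]\,KL(P_i,P_i')$, which is precisely the claimed decomposition.

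The main obstacle here is rigor rather than conceptual difficulty: I must justify the factorization and the cancellation of the (deterministic, hence degenerate) policy kernels on a space with \emph{continuous} action sets, and I must treat the boundary case $A_{si}=0$ with care. When no resource is allocated to arm $i$, the reward is the deterministic value $0$ under both instances, so the conditional KL divergence vanishes; this is fully consistent with the formula, since $A_{si}^{2-2b}=0$ for $b<1$, and the degenerate Gaussian contributes nothing to either side. Making the measurability claim explicit and disposing of the zero-allocation atom are the only genuinely delicate points; everything else follows the classical chain-rule argument for relative entropy.
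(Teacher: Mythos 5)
Your proposal is correct and follows essentially the same route as the paper's proof: factorize the trajectory likelihood so the policy terms cancel, take logarithms and expectations under $\PR_v$, condition on the history (the paper conditions on $A_{si}$, which is equivalent since $A_{si}$ is $H_s$-measurable), and apply the per-step Gaussian KL identity $KL(P_{A_{si}},P'_{A_{si}})=A_{si}^{2-2b}KL(P_i,P_i')$ before summing. Your explicit treatment of the zero-allocation case is a small added point of rigor that the paper's proof passes over silently, but it does not change the argument.
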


\begin{proof}
	The proof basically follows from Lemma 15.1 in \citet{lattimore2020bandit}. 
	We present the whole proof for completeness.
	
	Let $h_t=\left\{a_{11},a_{12},\ldots,a_{1K},y_{11},\ldots,y_{1K},\ldots,a_{t-1,1},\ldots,a_{t-1,K},y_{t-1,1},\ldots,y_{t-1,K}\right\}$
	be a realized sample path of $H_t$ defined in  \eqref{equ:his-def}. We have the probability density of the sample path $h_t$ under $\PR_v$ is 
	\begin{equation*}
		p_{v} (h_t)=\prod_{s=1}^t \prod_{i=1}^K (\pi_s(h_{s-1}))_i p_v(y_{si}|a_{si})=\prod_{s=1}^t \prod_{i=1}^K a_{si}  p_v(y_{si}|a_{si}),
	\end{equation*}
	where $p_v(y_{si}|a_{si})$ is the probability that $Y_{si}(a_{si})=y_{si}$ under $v$.
	The density of $\PR_{v'}$ is identical except that $p_v$ is replaced by $p_{v'}$.
	Then, we have 
	\begin{equation*}
		\frac{p_{v} (h_t)}{p_{v'} (h_t)}=\dfrac{\prod_{s=1}^t \prod_{i=1}^K a_{si}  p_v(y_{si}|a_{si})}{\prod_{s=1}^t \prod_{i=1}^K a_{si}  p_{v'}(y_{si}|a_{si})}=\dfrac{\prod_{s=1}^t \prod_{i=1}^K p_v(y_{si}|a_{si})}{\prod_{s=1}^t \prod_{i=1}^K  p_{v'}(y_{si}|a_{si})},
	\end{equation*}
	where the last equality holds because the terms involving the policy $\pi$ cancel.
	Taking logarithms of both sides, we have
	\begin{equation*}
		\log \frac{p_{v} (h_t)}{p_{v'} (h_t)}=\sum_{s=1}^t \sum_{i=1}^K \log \frac{p_v(y_{si}|a_{si})}{p_{v'}(y_{si}|a_{si})}.
	\end{equation*}
	And taking expectations of both sides with respect to $v$, we have
	\begin{equation}\label{equ:KL-path}
		\EX_v \left[\log \frac{p_{v} (H_t)}{p_{v'} (H_t)}\right]=\sum_{s=1}^t \sum_{i=1}^K \EX_v \left[\log \frac{p_v(Y_{si}|A_{si})}{p_{v'}(Y_{si}|A_{si})}\right].
	\end{equation}
	Note that
	\begin{equation}\label{equ:KL-single}
		\EX_v \left[\log \frac{p_v(Y_{si}|A_{si})}{p_{v'}(Y_{si}|A_{si})}\right]
		=\EX_v \left[ 
		\EX_v \left[\log \frac{p_v(Y_{si}|A_{si})}{p_{v'}(Y_{si}|A_{si})}\Big| A_{si}\right]\right]=\EX_v \left[KL(P_{A_{si}}, P'_{A_{si}})\right],
	\end{equation}
	where the last equation holds by the definition of KL-divergence and the last expectation is taken with respect to $A_{si}$.
	Recall that 
	\begin{equation*}
		Y_{si}(A_{si})=A_{si} \mu_i+A_{si}^{b} \xi_{si} \sim N(A_{si} \mu_i,A_{si}^{2b}).
	\end{equation*}
	We have
	\begin{equation}\label{equ:KL-normal}
		KL(P_{A_{si}}, P'_{A_{si}})=\frac{A_{si}^2 (\mu_{i}-\mu_{i}')^2}{2 A_{si}^{2b}}=A_{si}^{2-2b} (\mu_{i}-\mu_{i}')^2/2 =A_{si}^{2-2b} KL(P_i,P_i').
	\end{equation}
	Recalling the definition of KL-divergence, we have
	\begin{equation*}
		KL\left(\PR_v,\PR_{v'}\right)=\EX_v \left[\log \frac{p_{v \pi} (H_t)}{p_{v' \pi} (H_t)}\right]=\sum_{i=1}^K \EX_v \left[\sum_{s=1}^t A_{si}^{2-2b} \right] KL(P_i,P_i'),
	\end{equation*}
	where the last equation holds by \eqref{equ:KL-path}, \eqref{equ:KL-single}, \eqref{equ:KL-normal}. 
\end{proof}

\begin{proof} [of Theorem \ref{thm:gap-ind-low}]:
	The proof basically follows from Theorem 15.2 in \citet{lattimore2020bandit}. The key difference is that the total allocated resource $\tS$ is a continuous random variable. 
	
	Fix a policy $\pi$. Let $\Delta$ be a constant to be chosen later. We consider the specific bandit instance $v \in \mathcal{P}$ with mean vector $\bm{\mu}=(\Delta,0,\ldots,0)$. Let $S_{Ti} \coloneqq \sum_{t=1}^T A_{ti}$ and $i=\argmin_{j \in \{2,\ldots,K\}} \EX_v[S_{Tj}]$ where $\EX_v$ denotes the expectation under instance $v$. Since $\sum_{j=1}^K \EX_v[S_{Tj}]=T$, it holds that $\EX_v[S_{Ti}] \le T/(K-1)$. We define the second bandit instance $v' \in \mathcal{P}$ with $\mu_j'=\mu_j$ for $j \neq i$ and $\mu_i'=2 \Delta$. So the arm $i$ is the best for $v'$. 
	
	Now we choose $A=\left\{S_{T1} \le T/2\right\}$. Under event $v$, arm $1$ is optimal, the regret has the lower bound 
	\begin{equation}\label{equ:LB-indep-v}
		R_{\pi}(T,v) \ge \frac{T}{2} \PR_v(A) \Delta,
	\end{equation} 
	where $\PR_v$ is the probability measure on the sample path induced by the $T$-round interconnection of $\pi$ and $v$.
	While under event $v'$, arm $i$ is optimal, the regret is lower bounded by
	\begin{equation}\label{equ:LB-indep-v'}
		R_{\pi}(T,v') \ge \frac{T}{2} \PR_{v'}(A^c) \Delta.
	\end{equation} 
	Combining \eqref{equ:LB-indep-v} and \eqref{equ:LB-indep-v'}, we have
	\begin{equation}\label{equ:LB-ind-sumR}
		R_{\pi}(T,v)+R_{\pi}(T,v') \ge \frac{T \Delta}{2} \left(\PR_v(A)+\PR_{v'}(A^c)\right) \ge \frac{T \Delta}{4} \exp\left(-KL(\PR_v,\PR_{v'})\right),
	\end{equation}
	where the last inequality follows by Theorem 14.2 in \citet{lattimore2020bandit}.
	By Lemma \ref{lem:diver-decom}, we have
	\begin{equation}\label{equ:KL-Asi-B-in}
		KL\left(\PR_v,\PR_{v'}\right)=\sum_{j=1}^{K} \EX_v\left[\sum_{t=1}^T A_{tj}^{2-2b}\right] KL(P_j,P_j')=\EX_v\left[\sum_{t=1}^T A_{ti}^{2-2b}\right] 2 \Delta^2=2\EX_v[B_{Ti}]  \Delta^2,
	\end{equation}
	where $B_{Ti} \coloneqq \sum_{t=1}^T A_{ti}^{2-2b}$. 
	
	For $b \le 1/2$, we have $A_{ti}^{2-2b} \le A_{ti}$. So $\EX_v[B_{Ti}] \le \EX_v[S_{Ti}] \le T/(K-1)$. According to \eqref{equ:LB-ind-sumR}, \eqref{equ:KL-Asi-B-in}, we have 
	\begin{align}\label{equ:LB-ind-plus-low}
		R_{\pi}(T,v)+R_{\pi}(T,v') &\ge \frac{T \Delta}{4} \exp\left(-2\EX_v[B_{Ti}]  \Delta^2\right) \overset{(a)}{\ge} \frac{T \Delta}{4} \exp\left(-2 \Delta^2 T/(K-1)\right)\notag\\ &\overset{(b)}{\ge} \frac{\exp(-1/2)}{8} \sqrt{T(K-1)} \ge \frac{2}{27} \sqrt{T(K-1)},
	\end{align}
	where $(a)$ follows from $B_{Ti} \le T/(K-1)$, $(b)$ follows from choosing $\Delta=\sqrt{(K-1)/4T}$.
	
	For $b > 1/2$, we have $A_{ti}^{2-2b} \ge A_{ti}$ and $B_{Ti} \ge S_{Ti}$. Moreover, we have
	\begin{equation*}
		\frac{1}{T} B_{Ti}=\frac{1}{T} \sum_{t=1}^T A_{ti}^{2-2b} \overset{(a)}{\le} \left(\frac{1}{T}\sum_{t=1}^T A_{ti}\right)^{2-2b}=\left(\frac{S_T}{T}\right)^{2-2b},
	\end{equation*}
	where $(a)$ follows from the concavity of the function $f(x)=x^{2-2b}$ and Jensen's inequality.
	So
	\begin{equation}\label{equ:LB-ind-BTi-up}
		\EX_v[B_{Ti}] \le T^{2b-1} \EX_v[S_{Ti}^{2-2b}] \le  T^{2b-1} \left(\frac{T}{K-1}\right)^{2-2b}=\frac{T}{(K-1)^{2-2b}}.
	\end{equation}
	Combining \eqref{equ:LB-ind-sumR}, \eqref{equ:KL-Asi-B-in}, \eqref{equ:LB-ind-BTi-up}, we have 
	\begin{align} \label{equ:LB-ind-plus-lowb}
		R_{\pi}(T,v)+R_{\pi}(T,v') &\ge \frac{T \Delta}{4} \exp\left(-2\EX_v[B_{Ti}]  \Delta^2\right) \overset{(a)}{\ge} \frac{T \Delta}{4} \exp\left(-2 \Delta^2 T/(K-1)^{2-2b}\right) \notag\\ &\overset{(b)}{\ge} \frac{\exp(-1/2)}{8} (K-1)^{1-b}\sqrt{T} \ge \frac{2}{27} (K-1)^{1-b}\sqrt{T},
	\end{align}
	where $(a)$ holds by \eqref{equ:LB-ind-BTi-up}, $(b)$ holds by choosing $\Delta=(K-1)^{1-b} /\sqrt{4T}$.
	Since the bandit instance can be chosen from $v$ or $v'$, the minimax regret
	\begin{align*}
		R_{\pi}(T) 
		&\ge \max\{R_{\pi}(T,v),R_{\pi}(T,v')\} \ge \frac{1}{2} (R_{\pi}(T,v)+R_{\pi}(T,v')) \\
		&\ge \left\{
		\begin{array}{lc}
			\frac{1}{27} \sqrt{(K-1)T}, &  \text{for} \ 0 \leq b \le 1/2, \\
			\frac{1}{27} (K-1)^{1-b}\sqrt{T} &  \text{for} \ 1/2 < b \le 1,
		\end{array}
		\right.
	\end{align*}
	where the last inequality holds by summarizing \eqref{equ:LB-ind-plus-low}, \eqref{equ:LB-ind-plus-lowb}.
\end{proof}

\begin{proof}[of Proposition \ref{prop:concen-mono}]:
	We prove it in the following three steps.
	
	\textbf{Step one:} to utilize the monotonic property of sequence $\bm{a}$, we transform random variables $\{\xi_s\}_{s=1}^t$ to $\{f_s\}_{s=1}^t$ and transform the sequence $\bm{a}$ to a positive sequence $\bm{b}$, where $\{f_s\}_{s=1}^t$ and  $\bm{b}$ are defined as 
	\begin{equation*}
		\left\{
		\begin{array}{ll}
			f_1  \coloneqq \xi_1+\xi_2+\ldots+\xi_t\\
			f_2 \coloneqq \xi_2+\ldots+\xi_t\\
			\quad \vdots\\
			f_t \coloneqq \xi_t
		\end{array}
		\right. , \quad \quad
		\left\{
		\begin{array}{ll}
			b_1  \coloneqq a_1\\
			b_2 \coloneqq a_2-a_1\\
			\quad \vdots\\
			b_t \coloneqq a_t-a_{t-1}
		\end{array}
		\right..
	\end{equation*}
	Thus, the numerator in \eqref{equ:concen-mono-result} is 
	\begin{align*}
		\sum_{s=1}^t a_{s} \xi_{s}= a_1 \sumst \xi_s+ (a_2-a_1) \sum_{s=2}^t \xi_s+ \ldots + (a_t-a_{t-1}) \xi_t=\sumst b_s f_s.
	\end{align*}
	Figure~\ref{fig:sequence_add} illustrates the transformation. 
	The summation $\sum_{s=1}^t a_{s} \xi_{s}$ is represented by the volume of the shadow area, which can be counted in the columns (adding up in $\bm{a}$) and the rows (adding up in $\bm{b}$). 
	\begin{figure}
		\centering
		\includegraphics[width=0.7\textwidth]{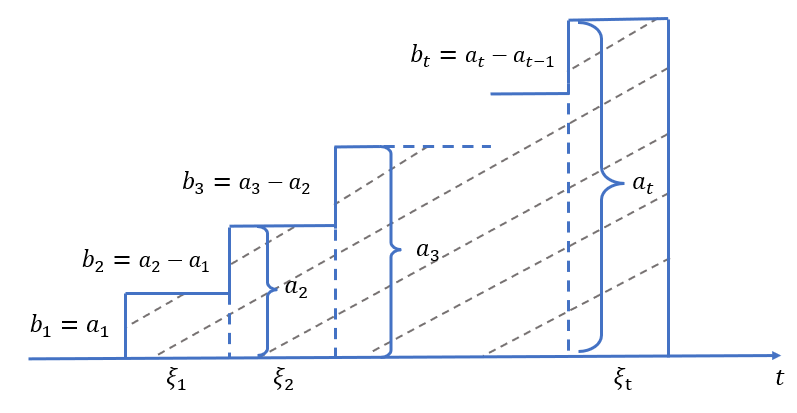}
		\caption{Summation of sequence $\xi_s$ and $f_s$}
		\label{fig:sequence_add}
	\end{figure}
	The denominator in \eqref{equ:concen-mono-result} is $\sqrt{\sumst a_s^2}=\sqrt{\sumst \left(\sum_{i=1}^s b_i\right)^2}$.
	Since $\bm{a} \in \mathcal{A}_m$,  we have $b_1=a_1>0$, $b_s=a_s-a_{s-1} \ge 0$ for $s=\{2,3,\ldots,t\}$. Thus, let $\mathcal{B} \coloneqq \{b_1>0, b_2 \ge 0, \ldots b_t \ge 0\}$, we have 
	\begin{equation}\label{equ:gap-indep-sup-ab}
		\sup_{\bm{a} \in \mathcal{A}_m}\frac{\sum_{s=1}^t a_{s} \xi_{s}}{\sqrt{\sum_{s=1}^t a_{s}^2}} = \sup_{\bm{b} \in \mathcal{B}} \dfrac{\sum_{s=1}^t b_s f_s}{\sqrt{\sum_{s=1}^t \left(\sum_{i=1}^s b_i\right)^2}}.
	\end{equation}
	
	\textbf{Step two:} we give a high-probability upper bound $\epsilon_s$ for $f_s$, where
	\begin{equation}\label{equ:gap-indep-epsilon-s}
		\epsilon_s \coloneqq \sqrt{t-s+1} \epsilon,
	\end{equation}
	and $\epsilon$ is a positive constant.
	If this holds, then we can easily show that With a high probability, $\sumst b_s f_s \le \sumst b_s \epsilon_s$ because $\bm{b} \succeq 0$.
	
	Since $f_s=\sum_{i=s}^t \xi_i$ is $\sqrt{t-s+1} \sigma$-sub-Gaussian, we have 
	\begin{equation}\label{equ:gap-indep-f-epsilon}
		\PR(f_s \ge \epsilon_s) \le \exp\left(-\frac{\epsilon_s^2}{2(t-s+1) \sigma^2}\right)= \exp\left(-\frac{\epsilon^2}{2 \sigma^2}\right).
	\end{equation}
	Then, taking the union of events $\{f_s \ge \epsilon_s\}$, we have
	\begin{equation*}
		\PR \left(\cup_{s=1}^t \{f_s \ge \epsilon_s\}\right) \le \sumst \PR(f_s \ge \epsilon_s) \le t \exp\left(-\frac{\epsilon^2}{2 \sigma^2}\right).
	\end{equation*}
	We define the event 
	\begin{equation}
		G=\{\cap_{s=1}^t \{f_s \le \epsilon_s\}\},
	\end{equation}
	and obtain that
	\begin{equation}\label{equ:gap-indep-Gc}
		\PR(G^c) \le t\exp\left(-\frac{\epsilon^2}{2 \sigma^2}\right).
	\end{equation}
	Note that under the event $G$, we always have $f_s \le \epsilon_s$ for all $s$. 
	For any realization of $f_1,f_2,\ldots, f_t$, we have the corresponding vector $\bm{b}^*(\bm{f}) \in \mathcal{B} $ which maximizes  \eqref{equ:gap-indep-sup-ab}. 
	Thus, under event G, we have
	\begin{equation}\label{equ:gap-indep-bepsilon}
		\sup_{\bm{b} \in \mathcal{B}} \dfrac{\sum_{s=1}^t b_s f_s}{\sqrt{\sum_{s=1}^t \left(\sum_{i=1}^s b_i\right)^2}} 
		\!=\! \dfrac{\sum_{s=1}^t b_s^* f_s}{\sqrt{\sum_{s=1}^t \left(\sum_{i=1}^s b_i^*\right)^2}} 
		\!\overset{(a)}{\le}\! \dfrac{\sum_{s=1}^t b_s^* \epsilon_s}{\sqrt{\sum_{s=1}^t \left(\sum_{i=1}^s b_i^*\right)^2}} 
		\!\le\! \sup_{\bm{b} \in \mathcal{B}} \dfrac{\sum_{s=1}^t b_s \epsilon_s}{\sqrt{\sum_{s=1}^t \left(\sum_{i=1}^s b_i\right)^2}},
	\end{equation}
	where $(a)$ follows by $\bm{b} \succeq 0$. So we have
	\begin{align}\label{equ:gap-indep-bs-bepsilon}
		&{}\PR\left(\Ind \left(\sup_{\bm{b} \in \mathcal{B}} \dfrac{\sum_{s=1}^t b_s f_s}{\sqrt{\sum_{s=1}^t \left(\sum_{i=1}^s b_i\right)^2}} \ge \sqrt{\frac{3}{2} \log t}\epsilon\right) \cap G\right)  \notag\\
		&\le \PR\left(\Ind \left(\sup_{\bm{b} \in \mathcal{B}} \dfrac{\sum_{s=1}^t b_s \epsilon_s}{\sqrt{\sum_{s=1}^t \left(\sum_{i=1}^s b_i\right)^2}} \ge \sqrt{\frac{3}{2} \log t}\right)\epsilon \cap G\right).
	\end{align}
	
	\textbf{Step three:} Note that there's no randomness in \eqref{equ:gap-indep-bepsilon}. We claim that
	\begin{equation}\label{equ:gap-indep-sup-b}
		\sup_{\bm{b} \in \mathcal{B}} \dfrac{\sum_{s=1}^t b_s \epsilon_s}{\sqrt{\sum_{s=1}^t \left(\sum_{i=1}^s b_i\right)^2}}=\sup_{\bm{b} \in \mathcal{B}} \dfrac{\sum_{s=1}^t b_s \sqrt{t-s+1} \epsilon}{\sqrt{\sum_{s=1}^t \left(\sum_{i=1}^s b_i\right)^2}} < \sqrt{\frac{3}{2} \log t} \epsilon.
	\end{equation}
	If so, then $\eqref{equ:gap-indep-bs-bepsilon}=0$.
	In the following, we will prove a stronger version of \eqref{equ:gap-indep-sup-b}: for any $\bm{b} \in \mathbb{R}^t$, 
	\begin{equation}\label{equ:gap-indep-b-algebra}
		\left(\sum_{s=1}^t b_s \sqrt{t-s+1}\right)^2 < \frac{3}{2} \log(t) \sum_{s=1}^t \left(\sum_{i=1}^s b_i\right)^2. 
	\end{equation}
	It's a purely algebraic problem to prove. We write \eqref{equ:gap-indep-b-algebra} in the matrix form:
	\begin{equation}\label{equ:gap-indep-b-matrix}
		\bm{b}^T \left( A-\frac{3}{2} \log (t) B\right) \bm{b}	
		< 0 \ \Leftrightarrow \ A \prec \frac{3}{2} \log(t) B,
	\end{equation}
	where
	\begin{align}
		&A=\begin{pmatrix}
			t & \sqrt{t(t-1)} & \sqrt{t(t-2)} & \ldots & \sqrt{t}\\
			\sqrt{t(t-1)} & t-1 & \sqrt{(t-1)(t-2)} & \ldots & \sqrt{t-1}\\
			\sqrt{t(t-2)} & \sqrt{(t-1)(t-2)} & t-2 & \ldots & \sqrt{t-2}\\
			\vdots & \vdots & \vdots & \ldots & \vdots\\
			\sqrt{t} & \sqrt{t-1} & \sqrt{t-2} & \ldots & 1
		\end{pmatrix}, \\
		&B=\begin{pmatrix}
			t & t-1 & t-2 & \ldots & 1\\
			t-1 & t-1 & t-2 & \ldots & 1\\
			t-2 & t-2 & t-2 & \ldots & 1\\
			\vdots & \vdots & \vdots & \ldots & \vdots\\
			1 & 1 & 1 & \ldots & 1
		\end{pmatrix}.
	\end{align}
	Note that $A$ and $B$ have the same diagonal elements. But the off-diagonal elements of $A$ are all greater than $B$. So $A-B$ is not negative definite. We will show $A-\frac{3}{2} \log (t) B$ is negative definite.
	Note that $A$ is a rank-$1$ matrix:
	\begin{equation}\label{equ:indenp-matrix-A}
		A=\lambda_A V V^\top, \ \text{where} \  \lambda_A=\frac{t(t+1)}{2}, \  V=\sqrt{\frac{2}{t(t+1)}} 
		\begin{pmatrix}
			\sqrt{t}&
			\sqrt{t-1}&
			\cdots&
			1
		\end{pmatrix}^\top.
	\end{equation} 
	Note that $B$ is full-rank and positive definite:
	\begin{equation}\label{equ:indenp-matrix-D}
		B=D D^T, \ \text{where} \  D=D^T=\begin{pmatrix}
			1 & 1 & \ldots & 1 & 1\\
			1 & 1 & \ldots & 1 & 0\\
			\vdots & \vdots & \ldots & \vdots & \vdots\\
			1 & 1 & \vdots & 0 & 0\\
			1 & 0 & \vdots & 0 & 0
		\end{pmatrix}, \ 
		D^{-1}=\begin{pmatrix}
			0 & 0 & \ldots & 0 & 1\\
			0 & 0 & \ldots & 1 & -1\\
			0 & 0 & \ldots & -1 & 0 \\
			\vdots & \vdots &\vdots & \vdots& \vdots\\
			0 & 1  & \vdots & 0 & 0\\
			1 & -1 & \ldots  & 0 & 0
		\end{pmatrix}.
	\end{equation}
	We construct the following matrix
	\begin{equation*}
		E\coloneqq
		\begin{pmatrix}
			\lambda_A^{-1} & V^T\\
			V & \frac{3}{2} \log (t) B
		\end{pmatrix}.
	\end{equation*}
	By the property of Schur complement (Theorem 1.12 in \cite{zhang2006schur}), we have 
	\begin{equation}\label{equ:gap-indep-Schur}
		\frac{3}{2} \log (t) B \succ A =\lambda_A V V^T \iff E \succ 0 \iff \lambda_A^{-1} > V^T \left(\frac{3}{2} \log (t) B\right)^{-1} V.
	\end{equation}
	It remains to prove
	\begin{equation}\label{equ:gap-indep-Schur-inq}
		\eqref{equ:gap-indep-Schur} \iff \frac{3}{2} \log (t) > \lambda_A V^\top B^{-1} V=\lambda_A (D^{-1}V)^\top D^{-1} V= (\sqrt{\lambda_A} D^{-1}V)^\top \sqrt{\lambda_A} D^{-1} V.
	\end{equation}
	By \eqref{equ:indenp-matrix-A}, \eqref{equ:indenp-matrix-D}, we have
	\begin{equation*}
		\sqrt{\lambda_A} D^{-1} V=\begin{pmatrix}
			0 & 0 & \ldots & 0 & 1\\
			0 & 0 & \ldots & 1 & -1\\
			0 & 0 & \ldots & -1 & 0 \\
			\vdots & \vdots &\vdots & \vdots& \vdots\\
			0 & 1  & \vdots & 0 & 0\\
			1 & -1 & \ldots  & 0 & 0
		\end{pmatrix} 
		\begin{pmatrix}
			\sqrt{t}\\
			\sqrt{t-1}\\
			\vdots\\
			\sqrt{3}\\
			\sqrt{2}\\
			1
		\end{pmatrix} =
		\begin{pmatrix}
			1\\
			\sqrt{2}-1\\
			\vdots\\
			\sqrt{t-2}-\sqrt{t-3}\\
			\sqrt{t-1}-\sqrt{t-2}\\
			\sqrt{t}-\sqrt{t-1}\\
		\end{pmatrix}.
	\end{equation*}
	Thus, we have
	\begin{align*}
		(\sqrt{\lambda_A} D^{-1}V)^\top \sqrt{\lambda_A} D^{-1} V
		&=\sumst \left(\sqrt{s}-\sqrt{s-1}\right)^2
		=1+\sum_{s=2}^t \left(\dfrac{1}{\sqrt{s}+\sqrt{s-1}}\right)^2\\
		& \overset{(a)}{<} 1+\sum_{s=2}^t \left(\dfrac{1}{2\sqrt{s-1}}\right)^2
		=1+\frac{1}{4} \sum_{s=2}^t\frac{1}{s-1}
		=1+\frac{1}{4} \sum_{i=1}^{t-1}\frac{1}{i}\\
		& \overset{(b)}{\le} \frac{5}{4}+ \frac{1}{4} \int_{1}^{t-1} \frac{1}{x} dx
		\le \frac{5}{4}+ \frac{1}{4} \log t\\
		& \overset{(c)}{\le} \frac{3}{2} \log t,
	\end{align*}
	where $(a)$ follows by $\sqrt{s} \ge \sqrt{s-1}$, $(b)$ follows by $1/x$ decreasing in $x$, $(c)$ follows by $\log t \ge 1$ for $t \ge 2$.
	Thus, we have proved \eqref{equ:gap-indep-Schur-inq}, \eqref{equ:gap-indep-Schur}, \eqref{equ:gap-indep-b-matrix}, \eqref{equ:gap-indep-b-algebra} and \eqref{equ:gap-indep-sup-b}.
	
	In summary, we have
	\begin{align}
		{}&\PR\left( \sup_{\bm{a} \in \mathcal{A}_m}\frac{\sum_{s=1}^t a_{s} \xi_{s}}{\sqrt{\sum_{s=1}^t a_{s}^2}} \ge  \sqrt{\frac{3}{2} \log t}\epsilon \right) \notag\\
		&\overset{(a)}{=} \PR\left( \sup_{\bm{b} \in \mathcal{B}} \dfrac{\sum_{s=1}^t b_s f_s}{\sqrt{\sum_{s=1}^t \left(\sum_{i=1}^s b_i\right)^2}} \ge \sqrt{\frac{3}{2} \log t} \epsilon \right) \notag\\
		& \overset{(b)}{\le} \PR\left(\Ind \left(\sup_{\bm{b} \in \mathcal{B}} \dfrac{\sum_{s=1}^t b_s \epsilon_s}{\sqrt{\sum_{s=1}^t \left(\sum_{i=1}^s b_i\right)^2}} \ge \sqrt{\frac{3}{2} \log t}\epsilon\right) \cap G\right)+ \PR(G^c) \notag\\
		& \overset{(c)}{=} \PR(G^c) \overset{(d)}{\le} t \exp\left(-\frac{\epsilon^2}{2 \sigma^2}\right), \label{equ:gap-indep-sup-concen}
	\end{align}
	where $(a)$ holds by \eqref{equ:gap-indep-sup-ab}, $(b)$ holds by \eqref{equ:gap-indep-bs-bepsilon}, $(c)$ holds by \eqref{equ:gap-indep-sup-b}, $(d)$ holds by \eqref{equ:gap-indep-Gc}. Replacing $\epsilon$ by $\sqrt{\log t} \epsilon$, we have \eqref{equ:concen-mono-result}. Thus, we complete the proof of Proposition \ref{prop:concen-mono}.
\end{proof}

\begin{proof} [of Theorem \ref{thm:SE-indep-up}]:
	We use the concentration inequality for $\muone$. By \eqref{equ:gap-ind-sup-nor} and \eqref{equ:gap-indep-sup-concen}, we have 
	\begin{equation*}
		\PR\left(\bigg|(\muone-\mu_i) \sqrt{\tRone}\bigg| \ge  \sqrt{\frac{3}{2} \log t}\epsilon \right) \le t \exp\left(-\frac{\epsilon^2}{2 \sigma^2}\right)
	\end{equation*}
	for any arm $i \in [K]$, any period $t \in [T]$ and any possible value of $\muone$, $\tRone$ generated by the Algorithm \ref{alg:SE}. Let $\delta=T^{-4}$ and $\epsilon=\sqrt{2 \sigma^2 \log(1/\delta)}$.
	We have
	\begin{equation}\label{equ:gap-indep-con}
		\PR\left(\Big|(\muone-\mu_i) \sqrt{\tRone}\Big| \geq \sigma \sqrt{3  \log t \log (1/\delta)}\right) \leq t\delta.
	\end{equation}
	Next, let $D_t$ to denote the active set at the end of period $t$. We define the confidence interval and the good events,
	\begin{align}
		&CI(t,i)=\sigma \sqrt{3  \log t \log (1/\delta)} \bigg/ \sqrt{\tRone}, \ \forall \ i \in [K] \label{equ:gap-indep-CI}\\
		&G_{ti^*}= \left\{\mu_{i^*} \le UCB(t,i^*) \coloneqq \hat{\mu}_{ti^*}^{(1)}+CI(t,i^*)\right\}, \label{equ:gap-indep-Gti*}\\
		&G_{ti}=\left\{\mu_i \ge LCB(t,i) \coloneqq \muone-CI(t,i)\right\}, \ \text{for} \ i \in D_t/i^*,\label{equ:gap-indep-Gti}\\
		&G_t=\left\{\cap_{i \in D_t} G_{ti} \right\}, \ G=\left\{\cap_{t=1}^T G_t\right\}. \label{equ:gap-indep-Gt}
	\end{align}
	Under event $G_{ti^*}$, $\mu_{i^*}$ is not much lower than $\hat{\mu}_{ti^*}^{(1)}$. So the optimal arm will not be eliminated. Under event $G_{ti}$,  $\mu_{ti}$ is not much lower than $\muone$. So for those arms surviving for a long time, their suboptimality gap $\Delta_i$ will be small.
	
	From \eqref{equ:gap-indep-con}, we know that 
	\begin{equation}\label{equ:gap-indep-Gc-small}
		\PR\left(G_{ti^*}^c\right) \le t\delta,\ \PR\left(G_{ti}^c\right) \le t\delta,\ 
		\PR\left(G_{t}^c\right) \le \sum_{i \in D_t}
		\PR(G_{ti}^c) \le K t\delta, \ \PR(G^c) \le \sum_{t=1}^T \PR\left(G_{t}^c\right) \le KTt \delta.
	\end{equation}
	
	According to the definition of regret \eqref{equ:def-regret}, we have
	\begin{align}
		R_{\pi}(T)
		&=\EX\left[\sum_{t=1}^T \sum_{i=1}^K  \tA (\mu_{i^*}-\mu_i)\right]=\EX\left[\sum_{t=1}^T \sum_{i=1}^K  \tA \Delta_i\right] \notag\\
		&\overset{(a)}{=}\EX\left[\sum_{t=1}^T \sum_{i=1}^K  \tA \Delta_i \Ind \left(\tau_i \ge t \right)\right] \notag\\
		&=\EX\left[\sum_{t=1}^T \sum_{i=1}^K  \tA \Delta_i \Ind \left((\tau_i \ge t) \cap G^c\right)\right]+\EX\left[\sum_{t=1}^T \sum_{i=1}^K  \tA \Delta_i \Ind \left((\tau_i \ge t) \cap G\right)\right], \label{equ:gap-indep-reget-decom}
	\end{align}
	where $(a)$ holds by $\tA=0$ if $\tau_i <t$.
	
	For the first term in \eqref{equ:gap-indep-reget-decom}, we have
	\begin{align}
		\EX\left[\sum_{t=1}^T \sumiK  \tA \Delta_i \Ind \left((\tau_i \ge t) \cap G^c\right)\right]
		&\le \EX\left[\sum_{t=1}^T \sum_{i=1}^K  \tA \Delta_{max} \Ind \left((\tau_i \ge t) \cap G^c\right)\right] \notag\\
		&\overset{(a)}{=}\Delta_{max} \EX\left[\sum_{t=1}^T \sum_{i \in D_t} \tA \Ind \left( G^c\right) \right] \notag\\
		&\overset{(b)}{=}\Delta_{max} \sum_{t=1}^T \EX\left[\EX\left[\sum_{i \in D_t} \tA \Big| G^c\right] \Ind(G^c)\right] \notag\\
		&\overset{(c)}{=}\Delta_{max} \sum_{t=1}^T \PR(G^c) \notag\\
		& \overset{(d)}{\leq} \Delta_{max} T^3 K \delta, \label{equ:gap-indep-regret-Gc}
	\end{align}
	where $\Delta_{max}=\max_i \{\Delta_i\}$, 
	$(a)$ holds by $i \in D_t \Leftrightarrow \tau_i \ge t$, $(b)$ holds by the tower rule of expectation, $(c)$ holds by $\sum_{i \in D_t} \tA=1$, $(d)$ holds by \eqref{equ:gap-indep-Gc-small}.
	
	For the second term in \eqref{equ:gap-indep-reget-decom}, we have
	\begin{align}\label{equ:gap-indep-AG}
		\EX\left[\sum_{t=1}^T \sum_{i=1}^K  \tA \Delta_i \Ind \left((\tau_i \ge t) \cap G\right)\right]
		&=\EX\left[ \sum_{i=1}^K \sum_{t=1}^T \Delta_i \tA  \Ind \left((t \le \tau_i) \cap G\right)\right] \notag\\
		&=\EX\left[\left(\sum_{i=1}^K  \sum_{t=1}^{\tau_i} \Delta_i\tA\right)  \Ind (G) \right],
	\end{align}
	where the last equality holds because the argument of event $G$ is about the entire sample path.
	
	By \eqref{equ:gap-indep-Gti*}, \eqref{equ:gap-indep-Gti}, \eqref{equ:gap-indep-Gt}, we have under $G_t$,
	\begin{equation*}
		UCB(t,i^*) \ge \mu_{i^*} \ge \mu_i \ge LCB(t,i),
	\end{equation*}
	for the optimal arm $i^*$ and suboptimal arm $i \in D_t/i^*$. So the optimal arm $i^*$ will not be eliminated, i.e., $i^* \in D_t$. For those arms $i \in D_t/\{i^*\}$, since they are active, we have $UCB(t,i) \ge LCB(t,i^*)$. That implies
	\begin{equation}\label{equ:gap-indep-ULi}
		\muone+CI(t,i) \ge \hat{\mu}^{(1)}_{ti^*}-CI(t,i^*).
	\end{equation}
	Then, we give an upper bound for the suboptimality gap:
	\begin{equation}\label{equ:gap-indep-Delta-up}
		\Delta_i=\mu_{i^*}-\mu_i \overset{(a)}{\leq} \hat{\mu}^{(1)}_{ti^*}-\muone+CI(t,i^*)+CI(t,i) \overset{(b)}{\le} 2(CI(t,i^*)+CI(t,i)) \overset{(c)}{=} 4 CI(t,i),
	\end{equation}
	where $(a)$ follows from \eqref{equ:gap-indep-Gti*}, \eqref{equ:gap-indep-Gti}, $(b)$ follows from \eqref{equ:gap-indep-ULi}, $(c)$ follows from that if arm $i$ and $i^*$ are active at period $t$, their allocation sequences $\bm{A}_i$ and $\bm{A}_{i'}$ should be equal, so do sequences $\bm{R}^{(2)}_i$ and $\bm{R}^{(2)}_{i'}$. 
	
	Thus, when $G_t$ happens, by \eqref{equ:gap-indep-Delta-up}, we have $\Delta_i \le 4CI(t,i)$. So  the event $G_t$ is a subset of the event $\Delta_i \le 4CI(t,i)$. Thus,
	$\Ind(G_t) \le \Ind(\Delta_i \le 4CI(t,i))$ for any $i \in [K]$. Furthermore, when $G$ happens, we have $\Delta_i \le 4CI(t,i)$ for any $i \in [K]$ and any $t \in [T]$ and $\Ind (G) \le \Ind\left(\Delta_i \le 4CI(t,i) \ \forall i, \ \forall t\right)$.
	So we have
	\begin{align}
		\eqref{equ:gap-indep-AG}
		&\le \EX\left[\left(\sum_{i=1}^K  \sum_{t=1}^{\tau_i} \Delta_i\tA\right)  \Ind (\Delta_i \le 4CI(t,i) \ \forall i, \ \forall t) \right] \notag\\
		&\overset{(a)}{\le} \EX\left[\sum_{i=1}^K  \sum_{t=1}^{\tau_i} \Delta_i\tA  \Ind (\Delta_i \le 4CI(\tau_i,i)) \right] \notag\\
		&\overset{(b)}{\le} 4\EX\left[\sum_{i=1}^K  \sum_{t=1}^{\tau_i} \tA  CI(\tau_i,i) \right]\notag\\
		&\overset{(c)}{=}4\EX\left[\sum_{i=1}^K  S_{\tau_i,i}  CI(\tau_i,i) \right], \label{equ:gap-indep-S-tau}
	\end{align}
	where $(a)$ follows from picking up $t=\tau_i$ and omit the indicator $\Ind (\Delta_i \le 4CI(t,i))$ for other $t$, $(b)$ follows from replacing $\Delta_i$ by $4CI(\tau_i,i)$, $(c)$ follows from $S_{\tau_i,i}=\sum_{t=1}^{\tau_i} \tA $.
	
	By \eqref{equ:gap-indep-CI}, we have
	\begin{equation}\label{equ:gap-indep-CI-up}
		CI(\tau_i,i)=\sigma \sqrt{3  \log \tau_i \log (1/\delta)} \bigg/ \sqrt{R^{(2)}_{\tau_i,i}} \le \sigma \sqrt{3  \log T \log (1/\delta)} \sqrt{\sum_{s=1}^{\tau_i} \sA^{2b}} \big/ S_{\tau_i,i},
	\end{equation}
	where the second equation follows by $\tau_i \le T$ and the definition of $\tRone$ in \eqref{equ:def-R2}. Thus, plugging \eqref{equ:gap-indep-CI-up} into \eqref{equ:gap-indep-S-tau}, we have
	\begin{align}\label{equ:gap-indep-Asi}
		\eqref{equ:gap-indep-S-tau} \le  4 \sigma \sqrt{3 \log T \log(1/\delta)} \sumiK \sqrt{\sum_{s=1}^{\tau_i} \sA^{2b}}.
	\end{align}
	Note that the value of \eqref{equ:gap-indep-Asi} depends on the random stopping times $\bm{\tau}$. Lemma \ref{lem:gap-indep-L-bound} gives an upper bound for \eqref{equ:gap-indep-Asi} for any possible stopping times $\bm{\tau}$.
	
	According to Lemma \ref{lem:gap-indep-L-bound}, we have 
	\begin{equation}\label{equ:gap-indep-regret-G}
		\eqref{equ:gap-indep-Asi} \le 2 \sigma \sqrt{6 \log T \log(1/\delta)/(1-b)} K^{1-b} \sqrt{T}.
	\end{equation}
	
	In summary, by \eqref{equ:gap-indep-reget-decom}, \eqref{equ:gap-indep-regret-Gc}, \eqref{equ:gap-indep-regret-G}, the total regret can be upper bounded by 
	\begin{align*}
		R_{\pi}(T) 
		&\le \Delta_{max} T^3 K \delta+ 2 \sigma \sqrt{6 \log T \log(1/\delta)/(1-b)} K^{1-b} \sqrt{T}\\
		&= \Delta_{max}+4 \sqrt{6/(1-b)} \sigma (\log T) K^{1-b} \sqrt{T},
	\end{align*}
	where setting $\delta=1/T^4$.
\end{proof}

\begin{proof}[of Lemma \ref{lem:gap-indep-L-bound}]
	Without loss of generality, we let $1 \le \tau_1 \le \tau_2 \le \cdots \le \tau_K$. By the definition of $\sA$ in \eqref{equ:inden-SE-A-tau}, we have
	\begin{align}
		{}&\sum_{i=1}^K \sqrt{\sum_{s=1}^{\tau_i} \sA^{2b}} \notag\\
		=&\sqrt{\tau_1 K^{-2b}}+\sqrt{\tau_1 K^{-2b}+(\tau_2-\tau_1) (K-1)^{-2b}}\\
		+& \sqrt{\tau_1 K^{-2b}+(\tau_2-\tau_1) (K-1)^{-2b}+(\tau_3-\tau_2)(K-2)^{-2b}}+\ldots \notag\\
		+&\sqrt{\tau_1 K^{-2b}+(\tau_2-\tau_1) (K-1)^{-2b}+(\tau_3-\tau_2)(K-2)^{-2b}+\ldots+(\tau_K-\tau_{K-1})} \notag\\
		=&\sum_{i=1}^K \sqrt{\sum_{j=1}^i (\tau_i-\tau_{i-1}) (K-j+1)^{-2b}}=\sumiK c_i. \label{equ:inden-L-bound}
	\end{align}
	Let $c_i \coloneqq \sqrt{\sum_{j=1}^i (\tau_i-\tau_{i-1}) (K-j+1)^{-2b}}$, $d_i^2=(K-i+1)^{2b}-(K-i)^{2b}$.
	By Cauchy-Schwarz inequality, we have 
	\begin{align}\label{equ:gap-indep-CS}
		\left(\sumiK c_i\right)^2
		&=\left(c_1 d_1 \frac{1}{d_1}+c_2 d_2 \frac{1}{d_2}+\ldots+c_K d_K \frac{1}{d_K}\right)^2 \notag\\
		&\le \left(c_1^2 d_1^2+c_2^2 d_2^2+\ldots+c_K^2 d_K^2\right) \left(\frac{1}{d_1^2}+\ldots+\frac{1}{d_K^2}\right).
	\end{align}
	It can be checked that 
	\begin{align}
		&c_1^2 d_1^2+c_2^2 d_2^2+\ldots+c_K^2 d_K^2 \notag\\
		&=\tau_1 K^{-2b} K^{2b}+(\tau_2-\tau_1) (K-1)^{-2b} (K-1)^{2b} +(\tau_3-\tau_2)(K-2)^{-2b} (K-2)^{2b}+\ldots \notag\\
		&+(\tau_K-\tau_{K-1}) \tau_K \le T. \label{equ:gap-indep-cd}
	\end{align}
	Plugging \eqref{equ:gap-indep-CS} and \eqref{equ:gap-indep-cd} into \eqref{equ:inden-L-bound}, we have 
	\begin{align}\label{equ:gap-indep-CS-Lti}
		\eqref{equ:inden-L-bound} 
		\le \sqrt{T} \sqrt{\frac{1}{d_1^2}+\ldots+\frac{1}{d_K^2}}=\sqrt{T} \sqrt{\sum_{i=1}^{K} \dfrac{1}{(K-i+1)^{2b}-(K-i)^{2b}}}.
	\end{align}
	Next, we have
	\begin{align}
		\sum_{i=1}^{K} \dfrac{1}{(K-i+1)^{2b}-(K-i)^{2b}}
		&=\sum_{x=1}^{K} \dfrac{1}{x^{2b}-(x-1)^{2b}} \notag\\
		& \overset{(a)}{\le} \int_{1}^K \dfrac{ 1}{x^{2b}-(x-1)^{2b}} dx +1 \notag\\
		& \overset{(b)}{\le} \int_{1}^K x^{1-2b} dx+1 \notag\\
		& = \frac{1}{2-2b} \left(K^{2-2b}-1\right)+1 \notag\\
		& \overset{(c)}{\le} \frac{1}{2-2b} K^{2-2b}, \label{equ:gap-indep-d-up}
	\end{align}
	where $(a)$ follows by $\dfrac{1}{x^{2b}-(x-1)^{2b}}$ decreasing in $x$, $(b)$ follows by $\dfrac{1}{x^{2b}-(x-1)^{2b}} \le x^{1-2b}$ for $x \ge 1$, $(c)$ follows by $1-\frac{1}{2-2b} \le 0$ for $1/2 \le b \le 1$.
	
	Combining \eqref{equ:gap-indep-CS-Lti} and \eqref{equ:gap-indep-d-up}, we have
	\begin{equation*}
		\eqref{equ:inden-L-bound} =\sum_{i=1}^K \sqrt{\sum_{s=1}^{\tau_i} \sA^{2b}} \le \sqrt{\frac{1}{2-2b}} K^{1-b} \sqrt{T}.
	\end{equation*}
	Thus, we complete the proof of Lemma \ref{lem:gap-indep-L-bound}.
\end{proof}

\section{Proofs in Section \ref{sec:gap-dep-up}}
\begin{proof}[of Theorem \ref{thm:gap-dep-low}]
	The proof basicly follows from Theorem 16.2 in \citet{lattimore2020bandit}. 
	For a bandit instance $v \in \mathcal{P}$, $\mu_{i^*}$ is the mean of the optimal arm. 
	Fix a suboptimal arm $i$ and let $v' \in \mathcal{P}$ with $\mu_j'=\mu_j$ for $j \neq i$ and $\mu_i'=\mu_{i^*}+\epsilon$ where $\epsilon>0$ can be arbitrarily small. Thus, the KL-divengence is  $KL(P_i,P_i')=(\Delta_i+\epsilon)^2/2$.	
	By Lemma \ref{lem:diver-decom}, we have 
	\begin{equation}\label{equ:KL-Asi-B}
		KL\left(\PR_v,\PR_{v'}\right)=\sum_{j=1}^{K} \EX_v\left[\sum_{t=1}^T A_{tj}^{2-2b}\right] KL(P_j,P_j')=\EX_v\left[\sum_{t=1}^T A_{ti}^{2-2b}\right] (\Delta_i+\epsilon)^2/2,
	\end{equation}
	where the last equation follows from the definition of $v'$.
	Let $S_{Ti} \coloneqq \sum_{t=1}^T A_{ti}, B_{Ti} \coloneqq \sum_{t=1}^T A_{ti}^{2-2b}$. For $b \le 1/2$, we have $A_{ti}^{2-2b} \le A_{ti}$. So $B_{Ti} \le S_{Ti}$. By \eqref{equ:KL-Asi-B}, we have
	\begin{equation}\label{equ:LB-dep-key}
		KL\left(\PR_v,\PR_{v'}\right)= \EX_v\left[B_{Ti}\right] (\Delta_i+\epsilon)^2/2 \le \EX_v\left[S_{Ti}\right] (\Delta_i+\epsilon)^2/2.
	\end{equation}
	By Theorem 14.2 in \citet{lattimore2020bandit}, we have for any event $A$,
	\begin{equation}\label{equ:LB-dep-LeCam}
		\PR_v(A)+\PR_{v'}(A^c) \ge \frac{1}{2} \exp\left(-KL\left(\PR_v,\PR_{v'}\right)\right).
	\end{equation}
	Now we choose $A=\left\{S_{Ti} > T/2\right\}$. Under event $v$, arm $i$ is suboptimal, the regret has the lower bound 
	\begin{equation}\label{equ:LB-dep-v}
		R_{\pi}(T,v) \ge \frac{T}{2} \PR_v(A) \Delta_i.
	\end{equation} 
	While under event $v'$, arm $i$ is optimal, the regret is lower bounded by
	\begin{equation}\label{equ:LB-dep-v'}
		R_{\pi}(T,v') \ge \frac{T}{2} \PR_{v'}(A^c) (\mu_i'-\mu_{i^*})=\frac{T}{2} \PR_{v'}(A^c) \epsilon.
	\end{equation} 
	Combining \eqref{equ:LB-dep-v} and \eqref{equ:LB-dep-v'}, we have
	\begin{align}
		R_{\pi}(T,v)+R_{\pi}(T,v') 
		&\ge \frac{T}{2} \left(\PR_v(A) \Delta_i+\PR_{v'}(A^c) \epsilon\right) \notag\\
		&\ge \frac{T}{4} \min\{\Delta_i, \epsilon\} \left(\PR_v(A)+\PR_{v'}(A^c) \right) \notag\\
		&\overset{(a)}{\ge}\frac{T}{4} \min\{\Delta_i, \epsilon\} \exp\left(-KL\left(\PR_v,\PR_{v'}\right)\right) \notag\\
		&\overset{(b)}{\ge}\frac{T}{4} \min\{\Delta_i, \epsilon\} \exp\left(-\EX_v\left[S_{Ti}\right] (\Delta_i+\epsilon)^2/2\right) \label{equ:LB-dep-Reg-add},
	\end{align}
	where $(a)$ follows from \eqref{equ:LB-dep-LeCam}, $(b)$ follows from \eqref{equ:LB-dep-key}.
	Rearranging \eqref{equ:LB-dep-Reg-add} and taking the limit inferior leads to 
	\begin{align}
		\liminf_{T \rightarrow \infty} \frac{E_v[S_{Ti}]}{\log T} 
		&\ge \frac{2}{(\Delta_i+\epsilon)^2} \liminf_{T \rightarrow \infty} \log \left(\dfrac{T\min\{\Delta_i,\epsilon\}}{4(R_{\pi}(T,v)+R_{\pi}(T,v'))}\right) \big/ \log T \notag\\
		& = \frac{2}{(\Delta_i+\epsilon)^2} \liminf_{T \rightarrow \infty} \left(1+\frac{\log(\min\{\Delta_i,\epsilon\}/4)}{\log T}-\frac{\log(R_{\pi}(T,v)+R_{\pi}(T,v'))}{\log T}\right) \notag\\
		& = \frac{2}{(\Delta_i+\epsilon)^2}  \left(1-\limsup_{T \rightarrow \infty}\frac{\log(R_{\pi}(T,v)+R_{\pi}(T,v'))}{\log T}\right) \notag\\
		& = \frac{2}{(\Delta_i+\epsilon)^2}, \label{equ:LB-dep-Reg-inf}
	\end{align}
	where the last equality follows from the definition of consistency, which says that for any $p>0$ there exists a constant $C_p$ such that for a sufficiently large $T$, $R_{\pi}(T,v)+R_{\pi}(T,v') \le C_p T^p$, which implies that
	\begin{equation*}
		\limsup_{T \rightarrow \infty}\frac{\log(R_{\pi}(T,v)+R_{\pi}(T,v'))}{\log T} \le \limsup_{T \rightarrow \infty}\frac{p \log T +\log (C_p)}{\log T}=p,
	\end{equation*}
	which gives the result since $p>0$ can be arbitrarily small. Recalling the definition of regret, we have
	\begin{equation*}
		\liminf_{T \rightarrow \infty} \frac{R_{\pi}(T,v)}{\log T} = \liminf_{T \rightarrow \infty} \sum_{i:\Delta_i >0}\frac{\Delta_i E_v[S_{Ti}]}{\log T} \overset{(a)}{\ge} \sum_{i:\Delta_i >0} \frac{2 \Delta_i}{(\Delta_i+\epsilon)^2}
		\overset{(b)}{=} \sum_{i:\Delta_i >0} \frac{2}{\Delta_i},
	\end{equation*}
	where $(a)$ holds by \eqref{equ:LB-dep-Reg-inf}, $(b)$ holds by choosing arbitrarily small $\epsilon>0$.	
\end{proof}

\begin{proof}[of Theorem \ref{thm:gap-dep-up-greedy}]
	Recall the definition of regret 
	\begin{align}\label{equ:greedy-reg-decom}
		R_{\pi}(T,v)&=\sum_{t=1}^T \EX\left[\muopt-\sumiK \tA \mu_i\right]
		=T \muopt-\sumiK \EX\left[\sum_{t=1}^T \tA \mu_i\right]\notag\\
		&=T \muopt-\sumiK \EX[S_{Ti} \mu_i]
		=\sumiK \EX[S_{Ti}] \Delta_i.
	\end{align}
	By Algorithm \ref{alg:eps-greedy}, the resource allocated to arm $i$ is either $\tA$ or $1-K\tA$. So we have
	\begin{align}
		\EX[S_{Ti}]&=\frac{1}{K}+\sum_{t=2}^T \tA+ (1-K \tA) \PR\left(\mutwo \ge \hat{\mu}^{(2)}_{ti^*}\right)\notag\\
		& \le \frac{1}{K} \left(1+\sum_{t=1}^T t^{-\alpha}\right)+ \sum_{t=2}^T \PR\left(\mutwo \ge \hat{\mu}^{(2)}_{ti^*}\right)\notag\\
		& \le \frac{1}{K}\left(2+\int_{t=1}^T t^{-\alpha} dt\right) +\sum_{t=2}^T \PR\left(\mutwo \ge \hat{\mu}^{(2)}_{ti^*}\right)\notag\\
		& \le \frac{2\alpha-1}{K(\alpha-1)}+\sum_{t=2}^T \PR\left(\mutwo \ge \hat{\mu}^{(2)}_{ti^*}\right). \label{equ:greedy-STi}
	\end{align}
	At period $t$, we define the good events for arm $i$ that $G_{ti}=\{\mutwo \in (\mu_i-\Delta_i/2,\mu_i+\Delta_i/2)\}$.
	Under the event $G_{ti} \cap G_{ti^*}$, we have
	\begin{align*}
		\mutwo < \mu_i +\Delta/2 =  \mu_{i^*}-\Delta/2 <\hat{\mu}^{(2)}_{ti^*}.
	\end{align*}
	Thus, we have
	\begin{equation}\label{equ:greedy-mu-bad}
		\PR\left(\mutwo \ge \hat{\mu}^{(2)}_{ti^*}\right) \overset{(a)}{\le} \PR\left(\left(\mutwo \ge \hat{\mu}^{(2)}_{ti^*}\right) \cap( G_{ti} \cap G_{ti^*})\right) + \PR\left(G_{ti}^c \cup G_{ti^*}^c \right) \overset{(b)}{\le} \PR\left(G_{ti}^c \right)+\PR\left(G_{ti^*}^c\right),
	\end{equation}
	where $(a)$ holds by probability decomposition and $(b)$ holds by \\
	$\PR\left(\left(\mutwo \ge \hat{\mu}^{(2)}_{ti^*}\right) \cap\left( G_{ti} \cap G_{ti^*}\right)\right)=0$.
	The remaining is to show an upper bound for $\PR\left(G_{ti}^c \right)$ for any $i \in [K]$.
	We have 
	\begin{align}\label{equ:greedy-Gtic}
		\PR\left(G_{ti}^c \right) 
		\le \PR\left(\mutwo  \le \mu_i-\Delta_i/2\right)+\PR\left(\mutwo  \ge \mu_i+\Delta_i/2\right) = 2 \PR\left(\mutwo-\mu_i  \ge \Delta_i/2\right).
	\end{align}
	Recalling the definition of $\mutwo$ in \eqref{equ:mean-two} and the choice of $\tA$ in Algorithm \ref{alg:eps-greedy}, we have
	\begin{equation}\label{equ:gap-dep-mean}
		\mutwo-\mu_i=\frac{1}{\sum_{s=1}^t \Ind (\sA>0)} \sum_{s=1}^t \sA^{b-1} \xi_{si} \Ind(\sA>0)=\frac{1}{t} \sumst \sA^{b-1} \xi_{si}.
	\end{equation}
	Let $\epsilon_{ti}=K^{-1} t^{-\alpha}$. Since $\sumiK\epsilon_{si} \le 1$ for any $s \in [t]$, we have $\sA \ge \epsilon_{si}$ and $\sA^{b-1} \le K^{1-b} s^{\alpha(1-b)}$ for any $i \in [K]$. Since $\xi_{si}$ is $\sigma$-sub-Gaussian, by \eqref{equ:gap-dep-mean}, we have $\mutwo-\mu_i$ is $\frac{\sigma}{t} K^{1-b} \sqrt{\sum_{s=1}^t s^{2\alpha(1-b)}}$-sub-Gaussian. 
	And 
	\begin{align*}
		\sum_{s=1}^t s^{2\alpha(1-b)} 
		&\le \int_{s=1}^t s^{2\alpha(1-b)} ds+t^{2\alpha(1-b)}
		=\dfrac{t^{1+2\alpha(1-b)}-1}{1+2\alpha(1-b)}+t^{2\alpha(1-b)}\\
		& \overset{(a)}{\leq} \left(\frac{t}{1+2\alpha(1-b)}+1\right) t^{2\alpha(1-b)}
		\overset{(b)}{\leq} (t+1) t^{2\alpha(1-b)}\\
		& \leq 2 t^{1+2\alpha(1-b)}
		\le 2 t^{3-2b+\epsilon}, 
	\end{align*}
	where $(a)$ follows from $-1<0$ and $(b)$ follows from $1+\alpha(2-2b) \ge 1$. So $\mutwo-\mu_i$ is $\sigma K^{1-b} \sqrt{2t^{1-2b+\epsilon}}$-sub-Gaussian and 
	\begin{align}\label{equ:greedy-con-mu}
		\PR\left(\mutwo-\mu_i  \ge \Delta_i/2\right) \le \exp\left(-\dfrac{\Delta_i^2 t^{2b-1-\epsilon}}{16 \sigma^2 K^{2-2b}}\right) =\exp\left(-a t^{\gamma}\right),
	\end{align}
	where $a \coloneqq \Delta_i^2/(16 \sigma^2 K^{2-2b})>0$ and $\gamma \coloneqq 2b-1-\epsilon \in (0,1)$. 
	
	Let $m=\lceil 1/\gamma\rceil$, we have 
	\begin{align}
		\sum_{t=2}^T \exp\left(-a t^{\gamma}\right)
		& \overset{(a)}{\le} \int_{t=1}^{\infty} \exp\left(-a t^{\gamma}\right) dt 
		\overset{(b)}{=} \int_{t=1}^{\infty} \exp\left(-a t\right) dt^{1/\gamma} \notag\\
		& \overset{(c)}{\le} \int_{t=1}^{\infty} \exp\left(-a t\right) dt^{m}
		=m \int_{t=1}^{\infty} \exp\left(-a t\right) t^{m-1} dt \notag\\
		& =-\frac{m}{a} \int_{t=1}^{\infty} t^{m-1} d \exp\left(-a t\right) \notag\\
		&\overset{(d)}{=}-\frac{m}{a} \left(0-e^{-a}-\int_{t=1}^{\infty} \exp(-at) d t^{m-1}\right) \notag\\
		& =\frac{m}{a} \left(e^{-a}+\int_{t=1}^{\infty} \exp(-at) d t^{m-1}\right), \label{equ:greedy-sum-exp-half}
	\end{align}
	where $(a)$ holds by the function $\exp(-a t^{\gamma})$ decreasing in $t$, $(b)$ holds by changing of variable $t \rightarrow t^{1/\gamma}$, $(c)$ follows from $dt^{1/\gamma} =(1/\gamma) t^{1/\gamma-1} dt \le m t^{m-1} dt=d t^m$, $(d)$ follows from integration by parts. Thus, we have reduced $\int_{t=1}^{\infty} \exp\left(-a t\right) dt^{m}$ to $\int_{t=1}^{\infty} \exp\left(-a t\right) dt^{m-1}$. So on and so forth, we have 
	\begin{align}
		\eqref{equ:greedy-sum-exp-half}&=\frac{m}{a} \left(e^{-a}+\frac{m-1}{a}\left(e^{-a}+\int_{t=1}^{\infty} \exp(-at) d t^{m-2}\right)\right) \notag\\
		& \overset{(f)}{=} \left(\frac{m}{a}+\frac{m(m-1)}{a^2}+\ldots+\frac{m!}{a^m}\right) e^{-a} \notag\\
		& \le \left(\frac{m}{a}+\frac{m^2}{a^2}+\ldots+\frac{m^m}{a^m}\right) e^{-a} \notag\\
		& \overset{(g)}{=}\frac{m}{a} \left(\frac{(m/a)^m-1}{m/a-1}\right) e^{-a} \notag\\
		& \overset{(h)}{\le}\frac{m}{a} \left(\frac{(m/a)^m-1}{m/a-1}\right) \notag\\
		& =\frac{m}{m-a} \left((m/a)^m-1\right), \label{equ:greedy-sum-exp}
	\end{align}
	where $(f)$ follows from $\int_{t=1}^{\infty} \exp(-at) d t=e^{-a}$, $(g)$ follows from the summation for geometric sequence, $(h)$ follows from $e^{-a} < 1$. 
	If $a \le m/2$, then $m/(m-a) \le 2$ and $\eqref{equ:greedy-sum-exp} \le 2(m/a)^m$. If $a \ge m/2$, then $\eqref{equ:greedy-sum-exp} \le 2+2^2+\ldots+2^m \le 2^{m+1}$. Thus, we have  
	\begin{equation}\label{equ:greedy-ma}
		\eqref{equ:greedy-sum-exp} \le 2(m/a)^m+2^{m+1}.
	\end{equation}
	Combining all the inequalities, we have
	\begin{align*}
		& R_{\pi}(T,v)
		\overset{(a)}{=}\sumiK \EX[S_{Ti}] \Delta_i\\
		& \overset{(b)}{\le} \frac{2\alpha-1}{K(\alpha-1)} \sumiK \Delta_i+\sumiK\sum_{t=2}^T \PR\left(\mutwo \ge \hat{\mu}^{(1)}_{ti^*}\right)\Delta_i\\
		& \overset{(c)}{\le} \frac{2\alpha-1}{K(\alpha-1)} \sumiK \Delta_i+4 \sumiK \sum_{t=2}^T\PR\left(\mutwo-\mu_i  \ge \Delta_i/2\right) \Delta_i\\
		& \overset{(d)}{\le} \frac{2\alpha-1}{K(\alpha-1)} \sumiK \Delta_i+4 \sumiK \sum_{t=2}^T\exp(-at^{\gamma}) \Delta_i\\
		& \overset{(e)}{\le} \frac{2\alpha-1}{K(\alpha-1)} \sumiK \Delta_i+4 \sumiK \frac{m}{m-a} \left((m/a)^m-1\right) \Delta_i\\
		& \overset{(f)}{\le}
		\frac{2(1-b+\epsilon)}{K \epsilon}\sumiK \Delta_i+8 \sumiK \left((m/a)^m+2^{m}\right) \Delta_i\\
		&\! \overset{(g)}{=}\!
		\left(\frac{2(1\!- \!b\!+\! \epsilon)}{K \epsilon}\!+\!2^{\lceil 1/(2b-1-\epsilon)\rceil+4}\right) \sumiK \Delta_i\!+\!8 \sumiK \left(\frac{16 \sigma^2 K^{2-2b} \lceil 1/(2b \!-\!1\!-\!\epsilon) \rceil}{\Delta_i^2}\right)^{\lceil 1/(2b\!-\!1\!-\!\epsilon)\rceil} \!\Delta_i,
	\end{align*}
	where $(a)$ holds by \eqref{equ:greedy-reg-decom}, $(b)$ holds by \eqref{equ:greedy-STi}, $(c)$ holds by \eqref{equ:greedy-mu-bad},\eqref{equ:greedy-Gtic}, $(d)$ holds by \eqref{equ:greedy-con-mu}, $(e)$ holds by \eqref{equ:greedy-sum-exp}, $(f)$ hols by \eqref{equ:greedy-ma} and $\alpha=1+\epsilon/(2-2b)$, $(g)$ holds by $m=\lceil 1/(2b-1-\epsilon)\rceil$. Thus, the proof of Theorem \ref{thm:gap-dep-up-greedy} is completed.
\end{proof}

\end{document}